\documentclass{article} 
\usepackage{iclr2026_conference,times}

\usepackage{amsmath,amsfonts,bm}

\def\eqref#1{equation~\ref{#1}}

\def\1{\bm{1}}

\DeclareMathAlphabet{\mathsfit}{\encodingdefault}{\sfdefault}{m}{sl}
\SetMathAlphabet{\mathsfit}{bold}{\encodingdefault}{\sfdefault}{bx}{n}

\usepackage{hyperref}
\usepackage{url}

\usepackage{graphicx}
\usepackage{amsmath}
\usepackage{amssymb}
\usepackage{mathtools}
\usepackage{amsthm}
\usepackage{dsfont}
\usepackage{multirow}
\usepackage{wrapfig}
\usepackage{booktabs}       
\usepackage{amsfonts}       
\usepackage{nicefrac}       
\usepackage{microtype}      
\usepackage{xcolor}         
\usepackage{subcaption} 
\usepackage{enumitem}

\usepackage{algorithm}
\usepackage{algorithmic}

\theoremstyle{plain}
\newtheorem{theorem}{Theorem}[section]

\newtheorem{lemma}[theorem]{Lemma}
\newtheorem{corollary}[theorem]{Corollary}
\theoremstyle{definition}
\newtheorem{definition}[theorem]{Definition}
\newtheorem{assumption}[theorem]{Assumption}
\theoremstyle{remark}
\newtheorem{remark}[theorem]{Remark}

\title{Understanding the Robustness of Distributed Self-Supervised Learning Frameworks Against Non-IID Data}

\author{Xuanyu Chen\textsuperscript{\rm 1}, Nan Yang\textsuperscript{\rm 1}$^{*}$, Shuai Wang\textsuperscript{\rm 2}, Dong Yuan\textsuperscript{\rm 1}\thanks{Corresponding Authors: Nan Yang and Dong Yuan.}\\
\textsuperscript{\rm 1}School of Electrical and Computer Engineering, The University of Sydney\\
\textsuperscript{\rm 2}Northwestern Polytechnical University \\
\texttt{\{xuanyu.chen,n.yang,dong.yuan\}@sydney.edu.au}\\ 
\texttt{\{s.wang\}@mail.nwpu.edu.cn} \\
}

\iclrfinalcopy 
\begin{document}

\maketitle

\begin{abstract}

Recent research has introduced distributed self-supervised learning (D-SSL) approaches to leverage vast amounts of unlabeled decentralized data. However, D-SSL faces the critical challenge of data heterogeneity, and there is limited theoretical understanding of how different D-SSL frameworks respond to this challenge. To fill this gap, we present a rigorous theoretical analysis of the robustness of D-SSL frameworks under non-IID (non-independent and identically distributed) settings. Our results show that pre-training with Masked Image Modeling (MIM) is inherently more robust to heterogeneous data than Contrastive Learning (CL), and that the robustness of decentralized SSL increases with average network connectivity, implying that federated learning (FL) is no less robust than decentralized learning (DecL). These findings provide a solid theoretical foundation for guiding the design of future D-SSL algorithms. To further illustrate the practical implications of our theory, we introduce MAR loss, a refinement of the MIM objective with local-to-global alignment regularization. Extensive experiments across model architectures and distributed settings validate our theoretical insights, and additionally confirm the effectiveness of MAR loss as an application of our analysis.
\end{abstract}

\section{Introduction}
Deep learning advancements have been driven by large-scale datasets, as seen in the training of LLMs, which require billions of data points \citep{hoffmann2022training, rae2021scaling}. However, real-world data is often decentralized, such as surveillance footage from distributed security cameras. This abundance of unlabeled distributed data has spurred interest in distributed self-supervised learning (D-SSL) \citep{zhuang2021collaborative, wang2022does}, which extends self-supervised learning (SSL) to decentralized settings. Existing D-SSL frameworks can generally be distinguished in two aspects: differing by the adopted self-supervised learning (SSL) method or by the applied distributed framework. Self-supervised learning (SSL) is a widely used technique to learn representations without human-labeled annotations by solving pretext tasks that generate supervisory signals from raw data \citep{gui2024survey}. Depending on the approach used to generate supervisory signals, SSL methods are broadly categorized into Contrastive Learning (CL) and Masked Image Modeling (MIM) \citep{liu2021self, zhang2022mask}, with representative methods like SimSiam \citep{chen2021exploring} and MAE \citep{he2022masked}. On the other hand, federated learning (FL) and decentralized learning (DecL) are two main frameworks in training models with distributed data \citep{verbraeken2020survey, sun2024understand}. FL aggregates local models via a central server \citep{mcmahan2017communication, zhuang2021collaborative}, while DecL enables direct inter-client communications for aggregating models, enhancing privacy and avoiding the dependence on the central server \citep{tang2022gossipfl, ayache2019random}.

One unique challenge of D-SSL research is handling highly heterogeneous data on clients. Distributed data among multiple clients are normally non-independent and identically distributed (non-IID), leading to performance degradation \citep{zhu2021federated}. To tackle this challenge, previous works proposed advanced D-SSL algorithms with robustness to heterogeneous data. Notable examples include FedU \citep{zhuang2021collaborative}, Orchestra \citep{lubana2022orchestra}, and L-DAWA \citep{rehman2023dawa}. However, despite continuous algorithmic innovation, there is still a lack of theoretical understanding of this heterogeneity problem. For example, FedU was designed within the FL framework, but how would its robustness to non-IID data change if deployed in a DecL framework without coordination from the server? Similarly, state-of-the-art D-SSL algorithms are primarily based on CL, while the adaptation of MIM methods to distributed settings remains under-explored. Could D-SSL based on MIM offer greater robustness to non-IID data than CL-based methods? These confusions converge into a fundamental research question affecting the advancement of D-SSL: 

\begin{center}
\textbf{\textit{How robust are different D-SSL frameworks against data heterogeneity?}}
\end{center}


To address this question, this paper aims to provide a theoretical understanding of how different D-SSL frameworks behave under heterogeneous data. We construct mathematical models in a simplified non-IID setting and rigorously analyze the representability of local and global representations learned by these algorithms. Our analysis reveals two key insights: (i) D-SSL algorithms based on Masked Image Modeling (MIM) are inherently more robust than those based on Contrastive Learning (CL), although their robustness still degrades under severe divergence between local and global distributions; and (ii) the robustness of decentralized SSL improves with the average connectivity of the network, which suggests that decentralized SSL is only as robust as federated D-SSL in the limited case of full connectivity (i.e., a fully connected network). Building on these insights, we also explore how theoretical results can inform algorithmic design. As an illustration, we refine the MIM objective with the additional alignment regularization, which we call MAR loss, to encourage local-to-global representation consistency. Finally, we conduct extensive experiments on ResNet \citep{he2016deep} and Vision Transformer (ViT) \citep{dosovitskiy2020image} across a variety of distributed settings and benchmark datasets to validate our theoretical findings and to demonstrate the usefulness of MAR loss as a practical example.

In summary, our main contributions are listed below:

\begin{enumerate}
\item We develop a rigorous theoretical analysis of distributed self-supervised learning (D-SSL) under non-IID data, showing that MIM-based D-SSL is inherently more robust than CL-based D-SSL.
\item We establish the relationship between network connectivity and robustness, proving that decentralized SSL benefits from higher connectivity and that federated SSL is no less robust than decentralized SSL.
\item We introduce MAR loss as an illustrative case study demonstrating how our theoretical results can guide algorithmic design, by refining the MIM objective with alignment regularization.
\item We conduct extensive experiments across model architectures and distributed settings, which validate our theoretical insights and further confirm the effectiveness of MAR loss.
\end{enumerate}


\section{Related Work}

\label{sec_related_work}

\textbf{Self-Supervised Learning.} Self-supervised learning (SSL) leverages unlabeled data by generating pseudo labels from raw inputs to learn meaningful representations \citep{gui2024survey}. Vision-based SSL methods are typically categorized into contrastive learning (CL) and masked image modeling (MIM) \citep{zhang2022mask, liu2021self}. CL learns representations by maximizing the similarity between positive pairs (i.e., similar data points created by data augmentation) and minimizing it between negative pairs (i.e., data pairs created by other data points) \citep{chen2020simple, he2020momentum}. Recent methods like SimSiam \citep{chen2021exploring} and BYOL \citep{grill2020bootstrap} advance the original contrastive loss by removing terms related to negative pairs, which improves stability and reduces batch size dependence. MIM, in contrast, randomly masks out patches of input images and predicts the missing parts, learning representations through a reconstruction loss \citep{bao2021beit, zhou2021ibot, xie2022simmim, he2022masked}. Although different in formulation, recent studies have shown that many MIM methods have close connections to CL (i.e., their objectives can be directly re-formulated as contrastive loss \citep{zhang2022mask, kong2019mutual}). In this work, we aim to figure out which SSL paradigm is inherently more robust against data heterogeneity.

\textbf{Distributed Learning.} Distributed learning enables collaborative model training across multiple clients without sharing data. Two dominant frameworks in this area are: federated learning (FL), which uses a central server to coordinate and aggregate models \citep{mcmahan2017communication}, and decentralized learning (DecL), where clients exchange models locally with neighbors \citep{tang2022gossipfl, ayache2019random}. While FL is more widely adopted \citep{zhang2021survey} for better convergence and training effectiveness, DecL offers benefits in scalability and privacy. Recent studies have started comparing these two frameworks \citep{beltran2023decentralized, hegedHus2021decentralized}. For example, Sun et al. explored which leads to better generalization and the impact of network architecture on generalization \citep{sun2024understand}. However, the relationship between network architecture and the non-IID robustness in distributed settings is still unclear. Our work addresses this gap by providing both theoretical analysis and empirical findings to clarify this relationship.

\textbf{Distributed SSL.} Distributed SSL (D-SSL) integrates SSL with distributed frameworks to leverage unlabeled, decentralized data while preserving privacy \citep{zhuang2021collaborative, yang2023fedmae}. A core challenge is learning robust representations under data heterogeneity \citep{zhu2021federated}. Prior work has primarily focused on algorithmic solutions such as FedU \citep{zhuang2021collaborative} and L-DAWA \citep{rehman2023dawa}. Although some studies also provide theoretical analyses, their purpose is to demonstrate the validity of the proposed algorithms rather than to advance the understanding of the robustness variance between different D-SSL frameworks \citep{lubana2022orchestra, jing2024fedsc}. The most relevant theoretical work is by Wang et al., who showed that SSL is more robust than supervised learning in distributed settings \citep{wang2022does}. Unfortunately, their study only analyzed a specific case of D-SSL where CL is combined with FL and did not extend it to other types of D-SSL frameworks. In contrast, our work delves deeper into these differences, shedding light on understanding the insensitivity of various D-SSL approaches under heterogeneous conditions.

\section{Problem Setup}

\label{sec_problem_setup}

To provide theoretical insights on understanding this central question, we first introduce our problem setup about distributed training and  D-SSL with heterogeneous data.

\subsection{Distributed Training}

\textbf{Distributed Setting.} Consider a distributed scenario consisting of a connected network of N clients, represented as a graph $\mathcal{G} = (\mathcal{V}, \mathcal{E})$, where $\mathcal{V}$ is the set of clients and $\mathcal{E}$ is the set of edges denoting direct communication links between clients. The connectivity of the graph is captured by a matrix $A \in \mathbb{R}^{N \times N}$, referred to as the adjacency matrix, where $A_i$ denotes the set including client $i \in [N]$ itself and its neighbors shown by $\mathcal{E}$, $|A_i|$ represents the size of this neighborhood set or the connectivity of client $i$, and $|\bar{A}| = \frac{1}{n} \sum_{i=1}^{n}(|A_i|)$ is the average connectivity. Hence, distributed training conducted through DecL satisfies $\forall i \in [N], 2 \leq |A_i| \leq N$. In contrast, FL relies on a central server that aggregates local models from all clients and broadcasts the global model back to them in each round, as in FedAvg \citep{mcmahan2017communication}. This architecture effectively enables every client to communicate with all others through the server, which corresponds to a fully connected decentralized topology where $\forall i \in [N], |A_i| = N$. A more formal specification of the graph structure and the mixing-weight conditions for this distributed setting is provided in Appendix \ref{sec_assumptions}.

\textbf{Objective of Distributed Optimization.} To utilize different clients to learn useful representations, distributed training generally optimizes the below global objectives:
\begin{equation}
\label{eq_distributed_global}
    W^*_{Dec} = \min_W \frac{1}{N} \sum_{i=1}^N \frac{1}{|A_i|} \sum_{j \in A_i} \mathcal{L}_j(W_j); \quad W^*_{Fed} =\min_W \frac{1}{N} \sum_{i=1}^N \mathcal{L}_i(W_i)
\end{equation}
where $\mathcal{L}_j$ is the objective of local SSL on client $j$, $W^*_{Dec}$ and $W^*_{Fed}$ denote the global objective of DecL and FL, respectively. In particular, at each iteration of DecL, each client conducts local updates using the local dataset and aggregates the updated local model with those from neighbors \citep{tang2022gossipfl}. For generating the global model for downstream tasks, there will be an additional aggregation on all local models after all iterations. Differently, the optimization of FL involves each round of model aggregation only on the central server \citep{mcmahan2017communication}. Then, the server broadcasts the global model to all clients for the next round of training. Note that the FL framework does not need another aggregation between all local models since the updated global model on the server can be used directly for fine-tuning.

\subsection{Rigorous Analysis of D-SSL on a Simplified Non-IID setting}

\label{sec_noniid_setup}

\textbf{Non-IID Client Data.} D-SSL involves all clients collaboratively training a global model by leveraging their local unlabeled datasets $\{D_i\}^{N}_{i=1}$ and communicating over the graph $\mathcal{G}$. Since sharing data is prohibited to protect privacy, the heterogeneity across these distributed data sources generally leads to a performance drop in many distributed applications \citep{zhuang2021collaborative, mcmahan2017communication}. Two common types of data heterogeneity are: feature heterogeneity and label heterogeneity \citep{zhu2021federated}. In this paper, we follow previous works \citep{wang2022does, liu2022self} to model a simplified but formal label non-IIDness between local datasets as follows. 
\begin{wrapfigure}{r}{0.6\textwidth}
    \vspace{-0.1in}
    \centering
    \includegraphics[width=\linewidth]{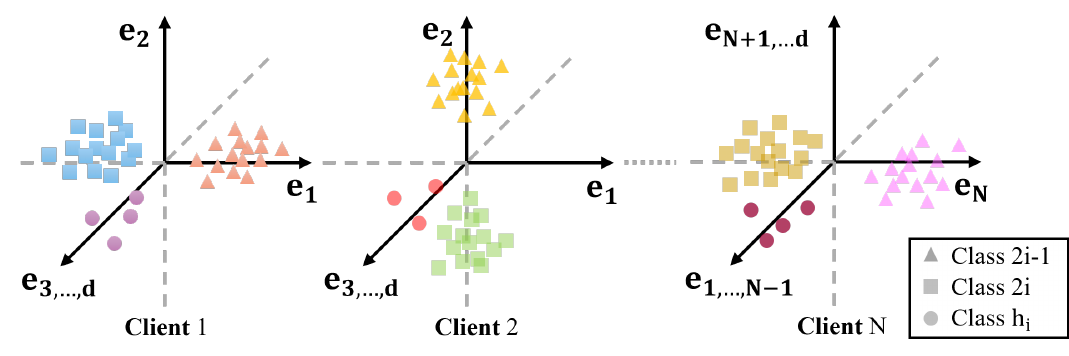}
    \caption{Illustration of the constructed heterogeneous distribution for local data on clients. Each client holds two unique data classes.}
    \label{fig_noniid}
    \vspace{-0.1in}
\end{wrapfigure}
The global data distribution $D = \bigcup_{i=1}^N D_i$ across clients is assumed to contain unlabeled data from $2N$ classes. For the dataset on client $i$, the local data distribution $D_i$ is constrained and imbalanced on three classes, with most samples belonging to classes $2i-1$ and $2i$, while the remaining very few samples come from the class $h_i \in [2N] \setminus \{2i-1, 2i\}$. Specifically, for a sufficiently large positive integer $d>0$, let $x \in \mathbb{R}^d \sim D_i $ be the data points in the local dataset and $e_1, \ldots, e_d$ be the standard unit-norm vectors of the d-dimensional Euclidean space. For class $2i-1$, we set $x^{(2i-1)} = e_i - \Sigma^{N}_{k \neq i, k=1}q^{(2i-1,k)}\tau e_k + \mu \xi^{(2i-1)}$, where $\tau$ and $\mu$ are two positive hyperparameters, $q$ is sampled uniformly from $\{0,1\}$ and $\xi \sim \mathcal{N}(0, I)$ from Gaussian distribution. Likewise, for class $2i$, we define  $x^{(2i)} = -e_i - \Sigma^{N}_{k \neq i, k=1}q^{(2i,k)}\tau e_k + \mu \xi^{(2i)}$. The size of the data from classes $2i-1$ and $2i$ are equal and both grow in polynomials of $d$. For infrequent class $h_i$, the samples are generated as: $x^{(h_i)} = e_{h_i} + \mu \xi^{(h_i)}$ and the amount of data is sublinear in $d$, denoted as $O(d^{\alpha})$ with $\alpha \in (0,1)$). Furthermore, we assume all $N$ local datasets to have an equal total number of samples, i.e., $|D_1| = |D_2| = \ldots = |D_N|$. To facilitate understanding, we provide an overview of this non-IID data distribution in Figure \ref{fig_noniid}. Next, we consider CL and MIM as two main paradigms of SSL and formulate CL and MIM, respectively. 

\textbf{CL Formulation.}  For CL, we adopt the more advanced Simsiam \citep{chen2021exploring} which trains with only the positive pairs $(g_a(x), g_b(x))$, where $g_a(\cdot)$ and $g_b(\cdot)$ are random augmentations drawn from SimSiam's augmentation policy (e.g., Gaussian noise, flipping). Consider a linear embedding function $f_{W}(x) = Wx$, where the weight matrix $W$ satisfies $W \in \mathbb{R}^{c \times d}$ and $c \geq 2N$ according to the distributed settings, the local objective on client $i$ is defined as: 
\begin{equation}
\label{eq_simsiam_loss}
\begin{aligned}
\mathcal{L}_{CL} &= -\mathbb{E}_{x \sim D_i} [|(W(g_a(x)))^{\intercal}(W(g_b(x)))] + \frac{1}{2}||W^{\intercal}W||^2_F.
\end{aligned}
\end{equation}
Eq.(\ref{eq_simsiam_loss}) captures the SimSiam loss by utilizing the negative inner product $\langle a,b \rangle$ to measure the distance between the positive pairs.  This objective also excludes a feature predictor for simplicity and includes a regularization term $||W^{\intercal}W||^2_F$ for more mathematically tractable, similar to previous works \citep{wang2022does,liu2022self}. Note that Eq.(\ref{eq_simsiam_loss}) stands for a general form of Simsiam loss due to the wide class of augmentation functions \citep{gui2024survey}. For a detailed and tractable theoretical exploration, we consider the linear formulation of data augmentation and further differ CL by the similarity between $g_a(\cdot)$ and $g_b(\cdot)$. In particular, for the case where the positive pairs are generated by similar augmentations, the objective becomes $\mathcal{L}_{CL} = -\mathbb{E}_{x \sim D_i} [(W(x+\xi))^{\intercal}(W(x+\xi'))] + \frac{1}{2}||W^{\intercal}W||^2_F$, where $\xi, \xi' \sim \mathcal{N}(0,I)$ are random noise sampled IID from the Gaussian distribution. On the other hand, when $g_a(\cdot)$ and $g_b(\cdot)$ are different, we define the loss as $\mathcal{L}_{CL}' = -\mathbb{E}_{x \sim D_i} [(W(x+\xi))^{\intercal}(W(Hx))] + \frac{1}{2}||W^{\intercal}W||^2_F$, where $H \in \mathbb{R}^{d \times d}$ denotes a linear image transformation (e.g., rotation, translation, etc.). The formal conditions on $H$ are given in Appendix \ref{sec_assumptions}.

\textbf{MIM Formulation.} For MIM, a random binary mask $m \in \{0,1\}^d$ (created by uniformly sampling 0 with probability $p$, i.e., mask ratio) is applied to partition the input $x$ into two complementary views: the unmasked part $x_1 = x \odot m$ and the masked part $x_2 = x \odot (1-m)$ satisfying $x_1 + x_2 = x$. Then, we train an encoder-decoder model $f = f_d \circ f_e$, where the encoder $f_e$ encodes the input $x_1$ to a latent representation $z=f_e(x_1)$, and the decoder $f_d$ decodes $z$ back to pixel space to reconstruct the masked part $x_2$. Hence, considering a linear encoder and decoder with embedding matrix $W_e \in \mathbb{R}^{c\times d}$ and $W_d \in \mathbb{R}^{d\times c}$, the local objective of MIM is given by 
\begin{equation}
\label{eq_MIM_loss}
\begin{aligned}
\mathcal{L}_{MIM} &= \mathbb{E}_{x \sim D_i}  \mathbb{E}_{x_1, x_2 | x} ||f_d(f_e(x_1)) - x_2||^2 = \mathbb{E}_{x \sim D_i} ||W_d W_e (x \odot m) - (x \odot (1-m))||^2,
\end{aligned}
\end{equation}
where the mean square error (MSE) loss is utilized to enforce the reconstructed image to be similar to the original image, and $\odot$ denotes the Hadamard product. Recent studies have focused on the connection between MIM and contrastive losses and found that the MIM reconstruction objective admits an alignment between the masked and unmasked parts \citep{zhang2022mask, kong2019mutual}. Based on these results, we adopt an alignment-style formulation of Eq.(\ref{eq_MIM_loss}) with $W := W_e \in \mathbb{R}^{c\times d}$: 
\begin{equation}
\label{eq_new_MIM_loss}
\begin{aligned}
\mathcal{L}_{MIM} &= - \mathbb{E}_{x \sim D_i} [(W (x \odot m))^{\intercal}(W(x \odot (1-m)))]  + \frac{1}{2}||W^{\intercal}W||^2_F,
\end{aligned}
\end{equation}
which implicitly aligns the masked and unmasked views in the embedding
space. The regularization term $||W^{\intercal}W||^2_F$ is also introduced to ensure a well-posed quadratic form and improve the traceability.

\section{Theoretical Insights} 

\label{sec_theory_insights}

In this section, we use the above problem setup to model different D-SSL frameworks and compare their robustness to heterogeneous data. Differences in applied SSL and network architecture lead to distinctions in learned representations, which can be further explored to determine variance in robustness. Due to page limitations, the complete proof for our analysis is provided in Appendix \ref{appendix_proof}.

\subsection{Analysis of Representations Learned by D-SSL}

We begin our analysis with the definition of the representability of the learned representation.

\begin{definition} (Representability Vector (RV)).
Let $\{e_1,  \ldots, e_d\}$ be the standard basis of $\mathbb{R}^d$. Let $W = [w_1, \ldots, w_c]^{\intercal} \in \mathbb{R}^{c\times d}$ be the feature matrix learned by the linear embedding function $f_{W}(x) = Wx$, where $c \leq d$. For row space $\mathcal{R}=\mathrm{row}(W) \subseteq \mathbb{R}^d$, we denote the representability of $\mathcal{R} $ as a vector $r = [||\Pi_\mathcal{R}(e_1)||^2_2, \ldots, ||\Pi_\mathcal{R}(e_d)||^2_2]^{\intercal}$, where $\Pi_\mathcal{R}(e_k)$ is the projection of $e_k$ onto $\mathcal{R}$ for $k \in [d]$. Hence, we have $||\Pi_\mathcal{R}(e_k)||^2_2 = \sum^c_{j=1} (e_k^{\intercal}v_j)^2$, where $\{v_1, \ldots, v_c\}$ is any orthonormal basis of $\mathcal{R}$.
\end{definition}

The intuition behind this definition is that for any input vectors $x \in \mathbb{R}^d$, the learned feature space should have a good representation of the standard basis vectors, $e_1,  \ldots, e_d$, to perform well. In particular, these basis vectors should have large projections onto the feature space. The introduction of the representability vector allows us to quantitatively assess the feature space learned by different D-SSL frameworks. Similar definitions and notations have also been used in previous works studying the feature space of SSL \citep{wang2022does, liu2022self}. Based on this definition and the above problem setup, we establish the following theorem for D-SSL based on MIM pre-training.

\begin{theorem}
\label{theorem_rep_DecMIM}
(Representability of Distributed MIM).
Consider a distributed scenario consisting of $N = \Theta(d^{\frac{1}{20}})$ clients and following the above non-iid setup with $\tau = d^\frac{1}{5}$ and  $\mu = d^{-\frac{1}{5}}$. For distributed SSL that utilizes Masked Image Modeling (MIM) as the pre-training approach, with a high probability, the following statements hold:
\begin{enumerate}
    \item Let $r_i^M = [r_{i,1}^M, \ldots, r_{i,c}^M]^\intercal$ be the local RV learned on client $i$, then we have $1-\frac{O( d^{-\frac{2}{5}} )}{2p( 1-p ) d^{\frac{2}{5}}+O( d^{-\frac{2}{5}})} \leq r_{i,k}^M \leq 1$, where $i \in [N] \backslash k$.
    \item Let $\bar{r}^{M}_{Dec} = [\bar{r}_{1}^M, \ldots, \bar{r}_{c}^M]^\intercal$ be the global RV learned through DecL, then we have $1-\frac{O( d^{-\frac{2}{5}} )}{2p( 1-p ) ( 1-1/|\bar{A}| ) d^{\frac{2}{5}} +O( d^{-\frac{2}{5}} )} \leq \bar{r}^M_{Dec} \leq 1$; while for the global RV $\bar{r}^{M}_{Fed} = [\bar{\textbf{r}}_{1}^M, \ldots, \bar{\textbf{r}}_{c}^M]^\intercal$ learned through FL, we have $1-\frac{O( d^{-\frac{2}{5}} )}{2p( 1-p )d^\frac{2}{5} -\Theta ( d^{\frac{7}{20}} ) +O( d^{-\frac{2}{5}} )} \leq \bar{r}^M_{Fed} \leq 1$.
\end{enumerate}
\end{theorem}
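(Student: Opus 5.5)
The loss in Eq.(\ref{eq_new_MIM_loss}) is a quadratic form in $W$, so the plan is to solve it in closed form. Write $M_i := \mathbb{E}_{x\sim D_i}\mathbb{E}_m[\,\mathrm{sym}((x\odot m)(x\odot(1-m))^{\intercal})]$. Then $\mathcal{L}_{MIM}(W) = -\mathrm{tr}(W M_i W^{\intercal}) + \frac12\|W^{\intercal}W\|_F^2$. Setting $B=W^{\intercal}W$, this becomes $-\mathrm{tr}(BM_i)+\frac12\|B\|_F^2$ over PSD matrices of rank at most $c$. By Eckart–Young, the minimizer's row space is spanned by the top-$c$ positive eigenvectors of $M_i$. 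The representability $r_k = \|\Pi_{\mathcal{R}}(e_k)\|^2$ therefore reduces to eigenvector perturbation for $M_i$.

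For the distributed objectives in Eq.(\ref{eq_distributed_global}), linearity gives the same form with $M_i$ replaced by a weighted average. For DecL this is $M_{Dec} = \frac1N\sum_i \frac{1}{|A_i|}\sum_{j\in A_i}M_j$, and for FL it is $M_{Fed}=\frac1N\sum_i M_i$. So all three statements follow from one spectral lemma applied to three matrices.

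\textbf{Computing $M_i$.} Since $m$ is independent of $x$, and $\mathbb{E}[m_a(1-m_b)] = p(1-p)$ for $a\neq b$ and $0$ for $a=b$, we get $M_i = p(1-p)\,(S_i - \mathrm{diag}(S_i))$, where $S_i=\mathbb{E}_{D_i}[xx^{\intercal}]$. The Gaussian noise contributes $\mu^2 I$ to $S_i$, which is purely diagonal and is killed. Its off-diagonal fluctuations in the finite-sample version are $O(\mu^2)$-small with high probability, which is where the $O(d^{-2/5})$ terms come from.

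Now expand the class means. For classes $2i-1$ and $2i$, $x=\pm e_i - \tau\sum_{k\neq i} q_k e_k+\mu\xi$. The $\pm e_i$ cross terms with the $q$-part cancel in the mixture. The dominant off-diagonal structure is the block $\tau^2\,\mathbb{E}[q_kq_l]=\tau^2/4$ on coordinates $k\neq l$, both different from $i$. The sublinear class $h_i$ contributes only $O(d^{\alpha-1})$ mass plus noise.

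\textbf{Spectral bound.} The plan is to identify a large-eigenvalue subspace of size $\Theta(p(1-p)\tau^2)=\Theta(p(1-p)d^{2/5})$ containing the relevant coordinates. Then a Davis–Kahan/$\sin\Theta$ argument treats everything else, of size $O(\mu^2)=O(d^{-2/5})$, as a perturbation. Since $r_k = 1-\|\Pi_{\mathcal{R}^\perp}e_k\|^2$ and $\|\Pi_{\mathcal{R}^\perp}e_k\|^2\le \frac{\|E\|^2}{\delta^2+\|E\|^2}$-type bounds hold, we obtain
\[
r_k \;\ge\; 1-\frac{O(d^{-2/5})}{\text{gap}+O(d^{-2/5})}.
\]
The gap is $2p(1-p)d^{2/5}$ locally. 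The conditions $N=\Theta(d^{1/20})$ and $c\ge 2N$ ensure the top-$c$ space captures all of these directions.

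\textbf{DecL and FL.} For DecL, averaging over the neighbourhood $A_i$ shows that coordinate $k$ is covered by the $\tau^2$ block of every client except its owner. The owner's contribution is diluted by the factor $1-1/|A_i|$, and averaging over $i$ gives the gap $2p(1-p)(1-1/|\bar A|)d^{2/5}$. For FL, the full average loses an additional $\Theta(N\tau^2/N\cdot\ldots)$ correction. Bounding the cancellation between $\pm e_i$ classes across $N=\Theta(d^{1/20})$ clients gives a subtractive $\Theta(d^{2/5}\cdot d^{-1/20})=\Theta(d^{7/20})$ term in the gap.

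\textbf{Main obstacle.} I expect the hardest step to be this precise bookkeeping of the eigengap after averaging. That means showing the $-\Theta(d^{7/20})$ loss for FL and the exact $(1-1/|\bar A|)$ factor for DecL, while controlling the high-probability concentration of the empirical covariances. For concentration I would first try matrix Bernstein with the polynomial sample sizes.
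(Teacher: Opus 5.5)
There is a genuine gap, and it sits at the step you call the spectral bound. Your identity $M_i=p(1-p)\bigl(S_i-\mathrm{diag}(S_i)\bigr)$ is correct, because $m_a(1-m_a)=0$. Carried one step further, however, it does not have the spectrum you then assume. The $(i,k)$ cross terms cancel between classes $2i-1$ and $2i$, and all noise and rare-class off-diagonal terms have mean zero. Hence
\[
\mathbb{E}[M_i]=(1-o(1))\,\tfrac{p(1-p)\tau^2}{4}\,(J-I)\ \text{on the block } B=[N]\setminus\{i\},\qquad \mathbb{E}[M_i]=0\ \text{elsewhere}.
\]
On $N-1$ coordinates, $J-I$ has a single positive eigenvalue $N-2$, with eigenvector $\mathbf{1}_B/\sqrt{N-1}$, and eigenvalue $-1$ with multiplicity $N-2$. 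The row space of any stationary $W$ is spanned by eigenvectors of $M_i$ with positive eigenvalues, since $M_iV=V\Sigma^2$. Davis--Kahan with gap $\Theta(\tau^2)$ then gives $r^M_{i,k}\le \frac{1}{N-1}+o(1)$ for $k\in B$, not $1-o(1)$. So no subspace with eigenvalues near $2p(1-p)d^{2/5}$ contains the $e_k$.

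The obstruction survives averaging. The FL matrix is $\frac{N-2}{N}$ times the same pattern on $[N]$, so $\bar r_k\le \frac1N+o(1)$. The DecL matrix is again zero on the diagonal and supported on $[N]\times[N]$. It therefore has trace zero and at most $N-1$ positive eigenvalues there, which forces $\sum_{k\in[N]}\bar r_k\le N-1+o(1)$. That contradicts the claimed $1-O(d^{-4/5})$, since $1/N=\Theta(d^{-1/20})$. The gap $2p(1-p)d^{2/5}$, the factor $1-1/|\bar A|$ and the term $-\Theta(d^{7/20})$ in your sketch are read off the target, not derived from your $M_i$.

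The paper's proof gets those numbers from a different matrix. It asserts $X_i^M=\frac{2p(1-p)}{|D_i|}\sum_j \mathrm{diag}(x_{i,j})\,\mathrm{diag}(x_{i,j})^{\top}$, which is diagonal with expected entries about $2p(1-p)\tau^2$ on $[N]\setminus\{i\}$. It then uses Weyl's inequality and the bound $e_k^{\top}X_i^Me_k\le\lambda_1\sum_j(e_k^{\top}v_j)^2$, rather than Davis--Kahan. In that model the FL deficit $\Theta(d^{7/20})=\Theta(\tau^2/N)$ comes from the one client in $N$ that owns coordinate $k$, whose entry there is $2p(1-p)$ instead of $2p(1-p)\tau^2$. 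It does not come from cancellation between the $\pm e_i$ classes. That diagonal is exactly the part your mask computation shows to vanish. You therefore cannot borrow the paper's route without contradicting your own correct first step. The stated bound is not reachable from Eq.~(\ref{eq_new_MIM_loss}) as written, only from a surrogate whose stationary matrix is that diagonal one.

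Even for that surrogate, your DecL bookkeeping is off. With a common $W$, the objective in Eq.~(\ref{eq_distributed_global}) equals $\frac1N\sum_j w_j\mathcal{L}_j(W)$ with $w_j=\sum_{i:\,j\in A_i}1/|A_i|$. The deficit on coordinate $k$ is then $2p(1-p)\,w_k(\tau^2-1)/N$. On any regular graph $w_k=1$, so this coincides with FL and no factor $1-1/|\bar A|$ appears. Also, $\frac1N\sum_i 1/|A_i|\neq 1/|\bar A|$ in general.
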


Theorem \ref{theorem_rep_DecMIM} shows the status of the feature space learned by distributed MIM with different objectives (i.e., local vs decentralized global vs federated global). Note that for each provided representability vector, we find a unique lower bound and a shared upper bound (considering $\sum^d_{j=1} (e_k^{\intercal}e_j)^2 = 1$). The distance between the lower and upper bound states how much the learned representation fluctuates in the c unit directions, $e_1, \ldots, e_c$,  associated with data generation. Therefore, the smaller the distance, the less sensitive the representation space is to the non-IID distribution of local datasets on clients. In other words, the corresponding D-SSL is more robust to heterogeneity.

By a similar proof, we derive the representability vectors for D-SSL with CL pre-training as follows.

\begin{theorem}
\label{theorem_rep_DecCL}
(Representability of Distributed CL).
Consider the same distributed scenario in Theorem \ref{theorem_rep_DecMIM}. For distributed SSL that utilizes Contrastive Learning (CL) as the pre-training approach, with a high probability, the following statements hold:
\begin{enumerate}
    \item Let $r_i^C = [r_{i,1}^C, \ldots, r_{i,c}^C]^\intercal$ be the local RV, then we have $1-\frac{O( d^{-\frac{1}{5}} )}{d^{\frac{2}{5}}+O( d^{-\frac{1}{5}})} \leq r_{i,k}^C \leq 1$ and $1-\frac{O( d^{-\frac{1}{5}} )}{\text{tr}(H) d^{\frac{2}{5}}+O( d^{-\frac{1}{5}})} \leq r_{i,k}^C \leq 1$ for similar and dissimilar augmentations, respectively.
    \item For the global RV $\bar{r}^{C}_{Dec} = [\bar{r}_{1}^C, \ldots, \bar{r}_{c}^C]^\intercal$ learned through DecL, we have $1-\frac{O( d^{-\frac{1}{5}} )}{( 1-1 / | \bar{A}| ) d^{\frac{2}{5}}  +O( d^{-\frac{1}{5}} )} \leq \bar{r}^C_{Dec} \leq 1$ and $1-\frac{O( d^{-\frac{1}{5}} )}{\text{tr}(H) ( 1-1 / | \bar{A}| )  d^{\frac{2}{5}} +O( d^{-\frac{1}{5}} )} \leq \bar{r}^C_{Dec} \leq 1$ for similar and dissimilar augmentations; while for $\bar{r}^{C}_{Fed} = [\bar{\textbf{r}}_{1}^C, \ldots, \bar{\textbf{r}}_{c}^C]^\intercal$ learned through FL, we have $1-\frac{O( d^{-\frac{1}{5}} )}{d^{\frac{2}{5}}-\Theta ( d^{\frac{7}{20}} ) +O( d^{-\frac{1}{5}} )} \leq \bar{r}_{Fed}^C \leq 1$ and $1-\frac{O( d^{-\frac{1}{5}} )}{\text{tr}(H) d^{\frac{2}{5}}- \Theta ( d^{\frac{7}{20}} ) +O( d^{-\frac{1}{5}} )} \leq \bar{r}_{Fed}^C \leq 1$.
\end{enumerate}
\end{theorem}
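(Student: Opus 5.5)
The plan mirrors the route used for Theorem \ref{theorem_rep_DecMIM}. Each local objective is a quadratic of the form $-\mathrm{tr}(W M_i W^{\intercal}) + \frac{1}{2}\|W^{\intercal}W\|_F^2$. For such an objective, any minimizer satisfies $W^{\intercal}W = [M_i]_+$ restricted to its top $c$ eigen-directions. So $\mathrm{row}(W)$ is the span of the top-$c$ eigenvectors of the symmetrized matrix $M_i$.

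For CL with similar augmentations, I will compute $M_i = \mathbb{E}_{x\sim D_i}[(x+\xi)(x+\xi')^{\intercal}] = \mathbb{E}[xx^{\intercal}]$, since the two noises are independent and mean zero. For dissimilar augmentations, $M_i = \frac{1}{2}\mathbb{E}[xx^{\intercal}H^{\intercal} + Hxx^{\intercal}]$. Under the conditions on $H$ in Appendix \ref{sec_assumptions}, this should scale the signal part by $\mathrm{tr}(H)$ (up to normalization). In contrast with MIM, no $p(1-p)$ masking factor appears.

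Next I plug in the data model. For client $i$ I will decompose the empirical second moment into three parts:
\begin{enumerate}
\item a dominant signal block $e_ie_i^{\intercal} + \frac{\tau^2}{4}\sum_{k\neq i}(\cdot)$ coming from the $\pm e_i$ and $q\tau e_k$ terms;
\item a rare-class contribution of weight $O(d^{\alpha-1})$ along $e_{h_i}$;
\item a noise part $\mu^2(\hat\Sigma_\xi)$ plus cross terms.
\end{enumerate}
Matrix concentration for Gaussian sample covariances (polynomially many samples) gives, with high probability, an operator-norm error of $O(\mu^2)=O(d^{-2/5})$ in the noise part, and $O(\mu\tau)$-type cross terms. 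The relevant eigengap for the directions $e_1,\dots,e_c$ is of order $\tau^2=d^{2/5}$ (times $\mathrm{tr}(H)$ in the dissimilar case). The perturbation that tilts the top eigenspace away from those coordinate directions is of order $O(d^{-1/5})$. This is the key difference from MIM: there, masking kills the diagonal noise alignment between views and leaves only $O(d^{-2/5})$. In CL the shared $x$ in both views retains cross terms of order $\mu\cdot$(signal), giving $O(d^{-1/5})$.

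Applying a Davis–Kahan / $\sin\Theta$ argument in the quantitative form used for Theorem \ref{theorem_rep_DecMIM}, I then get $1-\|\Pi_{\mathcal{R}}(e_k)\|^2 \le \frac{\text{pert}}{\text{gap}+\text{pert}}$. This yields statement 1. The upper bound $r\le 1$ is trivial since projections are contractions.

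For the global statements I will use the global objectives in \eqref{eq_distributed_global}, which are again of the same quadratic form with averaged matrices. For DecL, the effective matrix is $\frac{1}{N}\sum_i \frac{1}{|A_i|}\sum_{j\in A_i}M_j$. In the neighborhood average, the direction $e_k$ that is a signal direction for client $j$ is diluted, and its weight relative to competing directions becomes a $(1-1/|A_i|)$ fraction. Averaging over $i$ and using convexity yields the factor $(1-1/|\bar A|)$ on the gap.

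For FL, the averaged matrix $\frac{1}{N}\sum_i M_i$ has gap $d^{2/5}$ reduced by the cross-client interference. That interference is the rare-class and $\tau$-offset terms summed over $N=\Theta(d^{1/20})$ clients, contributing $\Theta(\tau^2/N)$-type corrections. I expect these to be of order $\Theta(d^{7/20})$, using $N=d^{1/20}$ so that $d^{2/5}/d^{1/20}=d^{7/20}$. The same perturbation bound $O(d^{-1/5})$ holds for the averages by the triangle inequality. Plugging into the $\sin\Theta$ bound gives statement 2 in both augmentation regimes.

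The main obstacle I expect is the precise eigengap computation for the averaged matrices. I need to show that the top-$c$ eigenspace is indeed aligned with $e_1,\dots,e_c$, that the gap is exactly $(1-1/|\bar A|)d^{2/5}$ resp. $d^{2/5}-\Theta(d^{7/20})$, and that the $q\tau e_k$ off-diagonal structure doesn't close it. I would first attempt this by writing the population matrix explicitly in the coordinates $e_1,\dots,e_N$ as $D + \frac{\tau^2}{4}(J-I)$-type blocks and diagonalizing. After that I would treat the empirical deviations perturbatively, reusing the lemmas from the MIM proof wherever possible.
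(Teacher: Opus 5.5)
Your skeleton matches the paper's. The stationary condition plus Eckart--Young makes $\mathrm{row}(W)$ the top-$c$ eigenspace of the symmetrized moment matrix. The DecL and FL objectives reduce to the same problem for a weighted average $\sum_j\omega_jX_j$, since $\sum_j\omega_j\|X_j-Y\|_F^2=\|\sum_j\omega_jX_j-Y\|_F^2+\mathrm{const}$.

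The final step, however, is genuinely different. The paper writes $\mathbb{E}[X_i^C]$ as a diagonal matrix and finishes with Weyl's inequality and the ratio $e_k^{\top}Xe_k/\lambda_1$. You instead keep the off-diagonal structure and use Davis--Kahan, and this matters. With the $q$'s drawn independently, $\mathbb{E}[q_kq_l]=1/4$ for $k\neq l$, so the $\tau$-block is $\frac{\tau^2}{4}(J+I)$, with $J$ the all-ones matrix. Then $\lambda_1\approx N\tau^2/4$ while $e_k^{\top}Xe_k\approx\tau^2/2$, so a ratio bound gives only $r_k\gtrsim 2/N$. An eigengap argument is unaffected, because all $e_k$, $k\in[N]\setminus\{i\}$, lie in one spectral subspace separated by about $\tau^2/4$ from the rest.

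Use Davis--Kahan in squared form, $1-r_k\le(\|E\|/\mathrm{gap})^2$. Then even the $O(\mu\tau)=O(1)$ cross terms you mention give $1-r_k=O(d^{-4/5})$, inside the claimed $O(d^{-3/5})$. Your linear form $\mathrm{pert}/(\mathrm{gap}+\mathrm{pert})$ with $\mathrm{pert}=O(1)$ gives only $O(d^{-2/5})$. As written, your two perturbation sizes ($O(\mu\tau)$ and $O(d^{-1/5})$) are inconsistent.

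Three steps need repair.

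(i) A gap of order $\tau^2$ exists only for $e_k$ with $k\in[N]\setminus\{i\}$ locally and $k\in[N]$ globally. On client $i$, $e_i$ has eigenvalue about $1$. The directions $e_{N+1},\dots,e_c$ sit in the $\mu^2$ noise bulk with no gap, so the top-$c$ eigenspace is not aligned with $e_1,\dots,e_c$. Restrict to the signal coordinates, which is all the paper's proof covers, and use $c\ge 2N$ so the top-$c$ eigenspace contains the top-$(N-1)$ (resp.\ top-$N$) one.

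(ii) The dissimilar case is a genuine gap: $\mathrm{tr}(PH_{\mathrm{sym}}P)>0$ does not make $\frac12(H\Sigma+\Sigma H^{\top})$ a positive rescaling of $\Sigma$ on the signal block. For example, $H=\mathrm{diag}(-I_N,2I_{c-N},I_{d-c})$ satisfies it ($\mathrm{tr}(PHP)=2c-3N>0$, and $\mathrm{tr}(H)>0$). But it turns the signal block into $-\Sigma_{[N]}$, so $e_1,\dots,e_N$ leave the top-$c$ eigenspace and $r_k\approx 0$. The skew part of $H$ also enters through $\frac12(H_{\mathrm{skew}}\Sigma-\Sigma H_{\mathrm{skew}})$, so no condition on $H_{\mathrm{sym}}$ alone can suffice. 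Even for $H=hI$ the scale is $h$, not $\mathrm{tr}(H)=hd$. You need an extra hypothesis, e.g.\ that $H$ preserves $\mathrm{span}(e_1,\dots,e_N)$ and its complement, acts on the former as $hI$ with $h>0$, and has bounded norm. Then Davis--Kahan goes through with a gap of order $h\tau^2$. The paper does not close this either: it writes $\mathbb{E}[X_i^{C'}]=\mathrm{diag}(\mathrm{tr}(H)\tau^2,\dots)$ without derivation. Its SVD detour is also unnecessary, since $X_i^{C'}$ is symmetric by construction.

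(iii) For DecL, the weight on $e_k$ is $\frac{\tau^2}{2}(1-\omega_k)$ with $\omega_k=\frac1N\sum_{i:k\in A_i}1/|A_i|$. This depends on the graph around $k$, not on $|\bar A|$; on a regular graph $\omega_k=1/N$ and DecL coincides with FL. Moreover, convexity gives $\frac1N\sum_i 1/|A_i|\ge 1/|\bar A|$, which is the wrong direction for a lower bound on the gap. What does work is $\omega_k\le\frac12$ (from $|A_i|\ge 2$), which makes the $[N]$-block $\succeq\frac{\tau^2}{8}I$. Since $1-1/|\bar A|\in[\tfrac12,1)$, the stated factor is then recovered only up to a constant absorbed into the $O(\cdot)$.
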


Theorem \ref{theorem_rep_DecCL} demonstrates that the local and global feature spaces learned by distributed CL are distinct from those learned by distributed MIM. However, it is not obvious which feature spaces hold a smaller gap between the lower and upper bounds. To determine which type of pre-training is less sensitive to data heterogeneity, we further compare their global feature spaces learned in DecL and FL framework, respectively, and summarize the results in the following theorem.

\subsection{MIM is inherently more Robust than CL with Heterogeneous Data}

\begin{theorem}
\label{theorem_robust_compare_1} 
Let $s = \max_{k \in [c]} \bar{r}_k - \min_{k \in [c]} \bar{r}_k$ be the sensitivity of distributed SSL to heterogeneous data $x \in \mathbb{R}^d$, measured as the spread of the leading $c$ coordinates of the learned global representability vector $\bar{r}$. For any network architecture, distributed SSL satisfies the following property: $\lim_{d \to \infty} [s^C > s^M]$, where $s^C$ and $s^M$ represent the sensitivities of distributed SSL adopting contrastive learning and masked image modeling as the pre-training approach, respectively.
\end{theorem}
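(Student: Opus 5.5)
Theorems \ref{theorem_rep_DecMIM} and \ref{theorem_rep_DecCL} already give every coordinate of the global RV inside an interval $[1-\epsilon,1]$, and in this paper the spread $s$ is exactly the distance between those bounds. So the plan is to read off $s$ as that interval width, then compare the widths for CL and MIM as $d\to\infty$. For each framework we write
\begin{equation*}
s \;\le\; \max_k \bar r_k - \min_k \bar r_k \;\approx\; \epsilon := 1-\text{(lower bound)},
\end{equation*}
and identify $s^C$ and $s^M$ with $\epsilon^C$ and $\epsilon^M$. The paper treats the gap between the bounds as the sensitivity measure (see the discussion after Theorem \ref{theorem_rep_DecMIM}), so we adopt the same convention. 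We do not try to prove exact attainment of the bounds. We will state explicitly that $s$ is measured by this worst-case width, since the theorems only provide bounds.

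\textbf{DecL case.} Take any topology with $2\le|\bar A|\le N$, so that $1-1/|\bar A|\in[1/2,1)$ is a constant bounded away from zero. For MIM, Theorem \ref{theorem_rep_DecMIM} gives
\begin{equation*}
\epsilon^M = \frac{O(d^{-2/5})}{2p(1-p)(1-1/|\bar A|)d^{2/5}+O(d^{-2/5})} = \Theta(d^{-4/5}),
\end{equation*}
using that the mask ratio $p\in(0,1)$ is fixed. For CL, Theorem \ref{theorem_rep_DecCL} gives $\epsilon^C = O(d^{-1/5})/(\Theta(d^{2/5}))$. This is $\Theta(d^{-3/5})$ for similar augmentations. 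For dissimilar augmentations it is $\Theta(d^{-3/5}/\mathrm{tr}(H))$, and we invoke the conditions on $H$ in Appendix \ref{sec_assumptions}, which keep $\mathrm{tr}(H)$ bounded (order one). The ratio is then $\epsilon^M/\epsilon^C = \Theta(d^{-1/5})\to 0$, so $s^C>s^M$ holds for all sufficiently large $d$.

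\textbf{FL case.} Here the denominators become $2p(1-p)d^{2/5}-\Theta(d^{7/20})$ and $\mathrm{tr}(H)d^{2/5}-\Theta(d^{7/20})$. Since $7/20<2/5$, the subtracted term is lower order and both denominators remain $\Theta(d^{2/5})$. The same exponent comparison, $d^{-4/5}$ against $d^{-3/5}$, therefore gives the conclusion. Because FL corresponds to $|A_i|=N$, this case together with the DecL case covers "any network architecture".

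\textbf{Main obstacle.} The delicate step is legitimizing the passage from bounds to the actual spread $s$. The $O(\cdot)$ numerators hide constants, and an upper bound on $\epsilon^M$ does not by itself lower-bound $s^C$. My first attempt would be to go back into the proofs of the two representability theorems. Those bounds come from the eigen-gap of the local covariance-type matrices: the signal eigenvalue is of order $d^{2/5}$, from $\tau^2=d^{2/5}$, and the perturbation is of order $\mu^2=d^{-2/5}$ for MIM versus order $\mu=d^{-1/5}$ for CL, the latter from the noise cross-terms in the augmented pairs. I would use this to argue that the numerators are tight, i.e.\ $\Theta$ rather than $O$, at least for the coordinates tied to the rare classes $h_i$, so that the spreads are genuinely of orders $d^{-3/5}$ and $d^{-4/5}$ respectively. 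If tightness cannot be established, I would fall back on the paper's own convention that sensitivity is the bound gap, and state the theorem in that sense.
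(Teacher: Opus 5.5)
Your argument is essentially the paper's: it also defines $s^{M}_{Dec}, s^{C_1}_{Dec}, s^{C_2}_{Dec}$ and their FL counterparts as one minus the lower bounds of Theorems \ref{theorem_rep_DecMIM} and \ref{theorem_rep_DecCL}, and compares the orders $d^{-4/5}$ versus $d^{-3/5}$ in both DecL and FL (using a difference for similar augmentations and a ratio for dissimilar ones), so your fallback convention is exactly what the paper adopts, and your uniform ratio $\epsilon^M/\epsilon^C\to 0$ is cleaner than the paper's claim that a vanishing difference has a negative limit. The weakness you flag ($O$ versus $\Theta$, bound width versus actual spread) is also unresolved in the paper, and note that Assumption \ref{assum_H_CL} only gives $\mathrm{tr}(PH_{\mathrm{sym}}P)>0$, not $\mathrm{tr}(H)=\Theta(1)$; the paper's ratio step silently treats $\mathrm{tr}(H)$ as a constant just as you do, and something like $\mathrm{tr}(H)=o(d^{1/5})$ is genuinely needed for the dissimilar-augmentation comparison.
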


The main intuition for the greater robustness (or smaller sensitivity) of distributed MIM is that CL learns representations from aligning features of the positive pair generated from the original data through data augmentation, whereas MIM aligns features of the reconstructed and the raw data to learn representations. Although the applied augmentation generally does not lead to a change in data labels \citep{chen2020simple, chen2021exploring}, the output is still a different image. In contrast, the masking operation splits the original image into the masked and unmasked parts, but a portion of the original data is retained in both parts. As a result, CL learns a local representation with greater randomness, and that additional randomness is also biased by local labels. Considering that data heterogeneity already exists among clients, the global representation learned by distributed CL is less uniform than that learned by distributed MIM.

\subsection{Impact of the Average Client Connectivity on Non-IID Robustness}

\label{sec_theory_avg_connectivity}

Next, we shift our focus to another dimension that distinguishes D-SSL algorithms and address the question: how does the network architecture affect the robustness of the feature space learned by D-SSL? The tool for solving this question is again the bounds of the representability vector. For the DecL setup where clients directly communicate with their direct neighbors, Theorem \ref{theorem_rep_DecMIM} and \ref{theorem_rep_DecCL} have implicitly shown the answer.

\begin{corollary}
\label{corollary_avg_connectivity}
 For any SSL pre-training approaches, if the distributed scenario is fully decentralized (i.e., without a central server), the robustness of distributed SSL against heterogeneous local data improves with the average connectivity $|\bar{A}|$ between clients in the network.
\end{corollary}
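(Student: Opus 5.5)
The plan is to read Corollary \ref{corollary_avg_connectivity} off the decentralized global bounds in Theorem \ref{theorem_rep_DecMIM} (part 2) and Theorem \ref{theorem_rep_DecCL} (part 2). Robustness is measured, as in Theorem \ref{theorem_robust_compare_1}, by the sensitivity $s = \max_k \bar r_k - \min_k \bar r_k$. Since every coordinate satisfies $\bar r_k \le 1$, $s$ is bounded by the gap between $1$ and the lower bound. For all three decentralized cases (MIM, CL with similar augmentations, CL with dissimilar augmentations) I would write that gap in the common form
\begin{equation*}
g(|\bar A|) = \frac{a_d}{\kappa\,(1-1/|\bar A|)\,d^{2/5} + b_d}.
\end{equation*}
Here $a_d,b_d>0$ are the $O(\cdot)$ terms, which do not depend on $|\bar A|$: $O(d^{-2/5})$ for MIM and $O(d^{-1/5})$ for CL. The constant $\kappa>0$ equals $2p(1-p)$ for MIM, $1$ for similar-augmentation CL, and $\operatorname{tr}(H)$ for dissimilar-augmentation CL. Positivity of $\kappa$ uses $p\in(0,1)$ and the conditions on $H$ in Appendix \ref{sec_assumptions}, which make $\operatorname{tr}(H)>0$.

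The key step is monotonicity. The map $|\bar A|\mapsto 1-1/|\bar A|$ is strictly increasing on the admissible range $[2,N]$, which follows from $2\le |A_i|\le N$. The denominator of $g$ is therefore strictly increasing, so $g$ is strictly decreasing in $|\bar A|$. This holds for every fixed $d$, and with the explicit exponents it also holds in the $d\to\infty$ regime in which the theorems are stated. A larger average connectivity thus gives a tighter lower bound on every coordinate of $\bar r_{Dec}$, which means a smaller admissible spread $s$ and hence greater robustness. At the endpoint $|\bar A|=N$ (full connectivity), the factor $1-1/N$ agrees with the FL bound up to lower-order terms: $N=\Theta(d^{1/20})$ gives $d^{2/5}/N=\Theta(d^{7/20})$, which matches the $-\Theta(d^{7/20})$ term. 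This is a useful consistency check with the FL statements.

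The main obstacle is that the corollary is stated for the robustness itself, while the theorems only control a bound on it. Monotonicity of an upper bound on $s$ does not by itself give monotonicity of $s$. I would first try to resolve this by going back to the derivation behind part 2 of the theorems. The decentralized objective in Eq.(\ref{eq_distributed_global}) weights each client's loss by neighborhood averages $1/|A_i|$. I expect the infrequent-class contribution and the cross-client discrepancy terms in the effective covariance to scale exactly with $1/|\bar A|$, so that the bounds are tight up to constants. If that tightness cannot be established, I would fall back on stating the corollary in the paper's own sense, namely robustness as certified by the representability-vector bounds, which is how Theorem \ref{theorem_robust_compare_1} is also argued.

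A second, milder issue is that the $O(\cdot)$ constants must be uniform in the graph. I would check in the proofs of Theorems \ref{theorem_rep_DecMIM}--\ref{theorem_rep_DecCL} that $a_d,b_d$ depend only on $d,p,H$ and not on $\mathcal G$, so that the monotonicity argument applies to all topologies at once.
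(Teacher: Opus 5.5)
Your main line, read with your fallback interpretation, is exactly the paper's proof. The appendix defines $s^{M}_{Dec}$, $s^{C_1}_{Dec}$ and $s^{C_2}_{Dec}$ directly as the gaps $1-(\text{lower bound})$ from part~2 of Theorems~\ref{theorem_rep_DecMIM} and~\ref{theorem_rep_DecCL}. It then observes that the factor $1-1/|\bar A|$ in each denominator makes these gaps shrink as $|\bar A|$ grows. Your strict-monotonicity phrasing is more accurate than the paper's ``inversely proportional''. The paper never addresses the bound-versus-true-$s$ issue or the uniformity issue you raise.

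\textbf{The tightness route will not work.} Treat $W$ as shared, as the appendix does. Then the objective in Eq.~(\ref{eq_distributed_global}) equals $\sum_j c_j\mathcal{L}_j(W)$ with $c_j=\frac{1}{N}\sum_{i:\,j\in A_i}\frac{1}{|A_i|}$ and $\sum_j c_j=1$. So the effective matrix is $\sum_j c_j X_j$, where $X_j$ is the local matrix of whichever method is used, and the graph enters only through the weights $c_j$, not through $|\bar A|$.
\begin{itemize}
\item On every regular (undirected) graph, $c_j=1/N$ for all $j$, whatever the degree.
\item Hence a ring and the complete graph give the same stationary point, the same representability vector and the same true $s$, namely the FL one.
\item So the cross-client terms do not scale with $1/|\bar A|$, and no tightness result can give strict monotonicity in $|\bar A|$.
\end{itemize}
The corollary is therefore available only in the bound-certified sense of your fallback. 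The $1-1/|\bar A|$ factor comes solely from how part~2 of the theorems is derived: the in-neighbourhood diagonal entry is averaged over all $i$ for every coordinate, and $\frac1N\sum_i 1/|A_i|$ is then replaced by $1/|\bar A|$.

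\textbf{A smaller point.} Assumption~\ref{assum_H_CL} gives $\operatorname{tr}(PH_{\mathrm{sym}}P)>0$, not $\operatorname{tr}(H)>0$. For the dissimilar-augmentation case you therefore need $\operatorname{tr}(H)>0$ as an extra hypothesis to get $\kappa>0$, or you must read the theorem's $\operatorname{tr}(H)$ as the projected trace.
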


Corollary \ref{corollary_avg_connectivity} also implies that the robustness of D-SSL conducted in a federated setup should be no worse than in a fully decentralized network. Consider the best case of the network topology, where each client can communicate with all other clients in the network. In this case, each client receives a model aggregated by the local models from all clients, which is exactly the global model distributed by the server in the federated setup. We can continue exploring to verify that this intuition is correct. Theoretically, combining Theorem \ref{theorem_rep_DecMIM}, Theorem \ref{theorem_rep_DecCL}, and Corollary \ref{corollary_avg_connectivity}, we arrive at another main theorem addressing the question introduced at the beginning of this section.

\begin{theorem}
\label{theorem_robust_compare_2} 
For any SSL pre-training paradigms, distributed SSL satisfies the following property: $\lim_{d \to \infty} [s_{Dec} \geq s_{Fed}]$,
where $s_{Dec} = \max_{k \in [c]}\bar{r}_{Dec}^{(k)} - \min_{k \in [c]}\bar{r}_{Dec}^{(k)}$ denotes the sensitivity of distributed SSL performed in the DecL setup (i.e., clients directly communicate with neighbors), and $s_{Fed} = \max_{k \in [c]}\bar{r}_{Fed}^{(k)} -  \min_{k \in [c]}\bar{r}_{Fed}^{(k)}$ represents the sensitivity of distributed SSL performed in the FL setup (i.e., all clients are indirectly connected through the central server).
\end{theorem}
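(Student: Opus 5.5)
The plan is to compare sensitivities through the lower bounds given in Theorem \ref{theorem_rep_DecMIM} and Theorem \ref{theorem_rep_DecCL}. Both theorems give the common upper bound $1$ for every leading coordinate. So for either paradigm the sensitivity satisfies $s \le 1 - \text{(lower bound)}$. We also use that this gap is attained up to lower-order terms: the infrequent class $h_i$ leaves some coordinate with representability near the lower bound, while the dominant classes push other coordinates to essentially $1$. Thus, to leading order in $d$, $s_{Dec}$ and $s_{Fed}$ are the corresponding gap terms:
\[
s_{Dec} \asymp \frac{O(d^{-\beta})}{\kappa\,(1-1/|\bar A|)\,d^{2/5}+O(d^{-\beta})},\qquad
s_{Fed} \asymp \frac{O(d^{-\beta})}{\kappa\, d^{2/5}-\Theta(d^{7/20})+O(d^{-\beta})}.
\]
The parameters are set per paradigm. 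For MIM, $\beta=2/5$ and $\kappa=2p(1-p)$. For CL with similar augmentations, $\beta=1/5$ and $\kappa=1$. For CL with dissimilar augmentations, $\beta=1/5$ and $\kappa=\mathrm{tr}(H)$. This makes the statement ``for any SSL pre-training paradigm'' a uniform comparison of two denominators with the same numerator order.

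The key step is to compare the denominators as $d\to\infty$. In FL, the correction $\Theta(d^{7/20})$ is of strictly lower order than $d^{2/5}$, so the denominator is $\kappa d^{2/5}(1-o(1))$. In DecL, the factor multiplying $\kappa d^{2/5}$ is $1-1/|\bar A|$. Since $2\le|\bar A|\le N$, this factor is at most $1-1/N$.

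Write $1-1/|\bar A| = 1-\epsilon$ with $\epsilon \ge 1/N = \Theta(d^{-1/20})$. The DecL denominator is then $\kappa d^{2/5} - \epsilon\kappa d^{2/5}$, and the subtracted term satisfies $\epsilon\kappa d^{2/5}\ge \Theta(d^{2/5-1/20})=\Theta(d^{7/20})$. We would compare this with the $\Theta(d^{7/20})$ deficit in the FL bound. This is consistent with the remark after Corollary \ref{corollary_avg_connectivity}: FL coincides with the fully connected case $|\bar A|=N$, and the FL deficit $\Theta(d^{7/20})$ is exactly $\kappa d^{2/5}/N$ under $N=\Theta(d^{1/20})$. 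Hence the DecL denominator is at most the FL denominator for every topology, with equality in order only at full connectivity. By Corollary \ref{corollary_avg_connectivity}, the bound is monotone in $|\bar A|$, so the worst case for the claim is $|\bar A|=N$, where the two bounds agree. Since the numerators are of identical order, a smaller denominator gives a larger gap, so $s_{Dec}\ge s_{Fed}$ in the limit.

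The main obstacle is the hidden constants. The $O(\cdot)$ numerators and the $\Theta(d^{7/20})$ term are only order statements, so at full connectivity ($|\bar A|=N$) both quantities have identical order and a strict or even non-strict inequality is not determined by order alone. I would first go back into the appendix derivations behind Theorem \ref{theorem_rep_DecMIM} and Theorem \ref{theorem_rep_DecCL}. The goal there is to show that the FL deficit constant is exactly the $1/N$ averaging factor, i.e.\ $\Theta(d^{7/20})=\kappa d^{2/5}/N$, and that the numerator constants coincide for DecL and FL because both arise from the same local heterogeneity term. If that identification holds, then $|\bar A|\le N$ gives $s_{Dec}\ge s_{Fed}$ directly, with equality exactly in the fully connected case. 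Otherwise I would state the result as holding up to constant factors, which is how $\lim_{d\to\infty}$ is used in Theorem \ref{theorem_robust_compare_1}.
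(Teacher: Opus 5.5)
Your core comparison is the paper's own. The paper likewise takes each sensitivity to be the gap term $1-(\text{lower bound})$ from Theorems~\ref{theorem_rep_DecMIM} and~\ref{theorem_rep_DecCL}. It also treats the DecL and FL numerators as equal, and reduces everything to the sign of $\kappa(1-1/|\bar A|)d^{2/5}-\bigl(\kappa d^{2/5}-\Theta(d^{7/20})\bigr)=-\kappa d^{2/5}/|\bar A|+\Theta(d^{7/20})$. From this it concludes $s_{Dec}>s_{Fed}$ for each paradigm when the network is not fully connected.

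Your identification $\Theta(d^{7/20})=\kappa d^{2/5}/N$ can be read off the FL averaging step in the appendix, where the $\Theta(d^{-1/20})$ is exactly $1/N$. It is in fact what makes that sign claim valid when $|\bar A|=\Theta(N)$, so on this point you are more careful than the paper.

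The real difference is the equality case you call the main obstacle. The paper does not match hidden constants there. It observes that $|\bar A|=N$ forces $|A_i|=N$ for every $i$, so the DecL objective $\frac{1}{N}\sum_i\frac{1}{N}\sum_{j}\mathcal{L}_j$ is literally the FL objective. The learned $W$ therefore coincide, and $s_{Dec}=s_{Fed}$ exactly. You should use that argument instead of comparing bounds, since two quantities with identical bounds need not be equal.

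Finally, your assertion that the gap is actually attained because of the rare class $h_i$ is not supported by the theorems. They give only one-sided bounds, uniform in $k$, and never single out the rare class. The paper simply adopts the gap term as the sensitivity, so state that identification explicitly rather than resting it on this heuristic.
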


This theorem further demonstrates the robustness trade-off between applying SSL in federated and decentralized frameworks. For less concern about the impact of data heterogeneity, we should conduct distributed SSL in a federated setup (often also referred to as federated self-supervised learning \citep{zhuang2021divergence, zhuang2021collaborative, lubana2022orchestra, rehman2023dawa}). However, the decentralized case is more common in reality, as it is challenging to provide a central server that can be trusted by all clients and has stable communication with them. Then, we can consider increasing the average connectivity between clients to minimize the negative impact of heterogeneous data on training (e.g., identifying under-connected clients and creating additional direct communication links).

\section{MAR Loss: An Illustrative Case Study in Enhancing Robustness }

\label{sec_MAR_loss}

The preceding analysis has addressed the main focus of this paper by establishing theoretical insights into the robustness of different D-SSL frameworks under heterogeneous data. As a further step, we illustrate how these insights can guide a more robust algorithmic design. In particular, our results show that although distributed MIM is fundamentally more robust than CL, its training dynamics are dominated by the client-specific covariance, causing local encoders to drift toward different directions before aggregation gradually mitigates this effect. This observation motivates us to refine the MIM objective with an additional term that explicitly and dynamically promotes consistency between local and global masked representations, which we term MAR loss. The integration of MAR into both federated and decentralized frameworks is summarized in Algorithm \ref{alg_fedmar} and Algorithm \ref{alg_decmar}


Formally, MAR loss augments the MIM objective with an alignment regularization term:  
\begin{equation}
\label{eq_mar_loss}
\begin{aligned}
\mathcal{L}_{MAR} &= \mathbb{E}_{x \sim D_i}\, \mathbb{E}_{x_1, x_2 | x} \Big[ \|f_d(f_e(x_1)) - x_2\|^2 + \gamma_t^{(i)} \cdot \text{A-MMD}(z_i, \bar{z}) \Big],
\end{aligned}
\end{equation}
where $z_i = f_e(x_1)$ and $\bar{z}$ denote the local masked and global representations, and $\gamma_t^{(i)} > 0$ is a dynamic weight for alignment. The alignment regularizer is based on \textit{Maximum Mean Discrepancy (MMD)}, a widely used measure of distributional discrepancy in machine learning \citep{gretton2012kernel, li2017mmd, gong2016domain}. MMD compares whether two distributions $P$ and $Q$ differ by mapping samples into a reproducing kernel Hilbert space (RKHS) and evaluating differences in their feature means. Typically, MMD adopts a Gaussian kernel $k(x, x') = \exp(-\|x-x'\|^2 / 2\sigma^2)$.  

In MAR, we employ an adaptive version (A-MMD) to compare the feature spaces of local and global representations more robustly. Unlike prior FL works that use vanilla MMD \citep{ma2024enhancing, hu2024fedmmd, liao2024foogd}, A-MMD selects the kernel bandwidth automatically rather than fixing it. Given batches of local and global embeddings of equal size $B$, A-MMD is computed as:  
\begin{equation}
\label{eq_A-MMD}
\text{A-MMD}(z_i, \bar{z}) = \frac{1}{B(B-1)} \Bigg(\sum_{a \neq b} k(z_{i,a}, z_{i,b}) + \sum_{a \neq b} k(\bar{z}_a, \bar{z}_b)\Bigg) - \frac{2}{B^2}\sum_{a=1}^{B}\sum_{b=1}^{B} k(z_{i,a}, \bar{z}_b),
\end{equation}
with the adaptive kernel defined as $k(z,z') = exp(-\frac{||z-z'||}{2 (\text{mean}_{a \neq b}||z_a- z_b||)^2})$ . This data-driven choice ensures stability across non-IID clients by scaling the kernel to the observed embedding distribution.  

Finally, to balance early-stage consensus and late-stage efficiency, we design the regularization weight $\gamma_t^{(i)}$ to decay smoothly from $\gamma_{\max}$ to $\gamma_{\min}$. We adopt a cosine schedule based on client participation:  
\begin{equation}
\label{eq_dynamic_gamma}
\gamma_t^{(i)} = \gamma_{\min} + (\gamma_{\max} - \gamma_{\min}) \cdot \tfrac{1}{2}\Big(1 + \cos \tfrac{\pi \cdot \omega_t^{(i)}}{\Omega}\Big),
\end{equation}
where $\omega_t^{(i)}$ counts the number of times client $i$ has been selected up to round $t$, and $\Omega$ controls the decay horizon. In DecL, where all clients participate every round, one can simply set $\Omega = T$. In FL with partial participation, a practical choice is the expected number of selections per client, or $T$ as a default. This schedule applies stronger alignment when client divergence is most pronounced, and gradually relaxes toward $\gamma_{\min}$ as training progresses, ensuring dynamic robustness gains.

\section{Experiments}

\label{sec_experiments}

In this section, we conduct extensive experiments to validate the correctness of our derived theoretical insights and evaluate the effectiveness of the MAR loss in improving the robustness of distributed MIM against data heterogeneity. We first introduce the experimental setup. Then we assess our results in different datasets, model backbones, and distributed settings.

\subsection{Experimental Setup}

\textbf{Datasets and Distributed Simulation.} We pre-train our models on the Mini-ImageNet dataset \citep{vinyals2016matching}, which contains 60,000 images extracted from ImageNet \citep{deng2009imagenet}. To simulate a distributed scenario with label non-IIDness, the dataset is partitioned by sampling the class priors of the Dirichlet distribution \citep{hsu2019measuring}. More heterogeneous division can be made with a smaller Dirichlet parameter $\alpha$ during sampling, while the IID case is simulated with a very large $\alpha$. Besides, we follow prior works to simulate feature heterogeneity by uniformly dividing datasets and applying unique data augmentation for each client \citep{wang2022does, zhu2021federated}. Hence, the local labels are kept the same but features are skewed into different domains before training. Furthermore, to simulate DecL, we use the Erdős-Rényi model \citep{erdds1959random} to initialize a connected network with the number of clients and the average connectivity as inputs and return the adjacency matrix $A$. For FL, we additionally assume a central server connecting with all clients. After pre-training, the models' backbones are fine-tuned on benchmark datasets, including CIFAR-10, CIFAR-100 \citep{krizhevsky2009learning}, and ImageNet. The fine-tuning accuracies are used for analysis. 

\textbf{Implementation Details.} For our experiments, we use ResNet \citep{he2016deep} and Vision Transformer (ViT) \citep{dosovitskiy2020image} as the model architecture. Following the problem setup in theoretical analysis, we select Simsiam \citep{chen2021exploring} and MAE \citep{he2022masked}  as the representatives of CL and MIM pre-training, respectively. In original works, Simsiam is used to pre-train ResNet models, while MAE is used to pre-train ViTs. We implement two new SSL baselines to show that our theoretical insights apply to any model architecture. One uses Simsiam to pre-train ViTs, and the other one pre-trains ResNet through MAE. Furthermore, we follow the classical distributed algorithms, D-PSGD \citep{lian2017can} and FedAvg \citep{mcmahan2017communication}, to implement the DecL and FL frameworks, and then implement our FedMAR and DecMAR algorithms based on these frameworks. All our codes are implemented in Python using the Pytorch framework and executed on a server with 4 $\text{NVIDIA\textregistered}$ RTX 3090 GPUs. The detailed training setup and server configuration can be found in Appendix \ref{appendix_experiment_setup}. Our codes are also available at \url{https://github.com/xuanyuLawrence/FedMAR-DecMAR}.

\subsection{Empirical Study}

\textbf{Insensitivity Superiority of Distributed MIM.} Table \ref{table:finetune_gap} compares the impact of data heterogeneity on the pre-training effectiveness between distributed MIM and CL. With highly heterogeneous data, the learned local feature space will be significantly different across clients, resulting in a greater divergence between local and global feature space and a larger drop in the performance compared to the IID setup \citep{zhuang2021collaborative, zhuang2021divergence, lubana2022orchestra}. Across various datasets and backbone architectures, we observe that distributed MIM consistently exhibits a smaller gap between IID and non-IID settings compared to distributed CL. The experimental results align with Theorem \ref{theorem_robust_compare_1}, verifying that MIM is less sensitive than CL when handling heterogeneous data in distributed scenarios. To further substantiate this theoretical insight, we also visualize the local and global feature spaces learned by distributed MIM and CL and compute the $l_2$-norm weight distance between their local and global models. Please see Appendix \ref{appendix_feature_space} for these external experimental results.

\begin{table}[t!]
\setlength{\tabcolsep}{3.5pt}
\scriptsize
\centering
\caption{\textbf{Fine-tuning accuracy ($\%$) of backbones pre-trained by different D-SSL algorithms.} All results provided in this table are the mean of three trials (L/non-IID = Label Non-IID; F/non-IID = Feature Non-IID). The values in brackets denote the gap between IID and non-IID performance.}
\label{table:finetune_gap}
\begin{tabular}{lccc|ccc|ccc}
    \toprule
    & \multicolumn{3}{c|}{CIFAR-10} & \multicolumn{3}{c|}{CIFAR-100} & \multicolumn{3}{c}{ImageNet} \\
    \cmidrule(lr){2-4} \cmidrule(lr){5-7} \cmidrule(lr){8-10}
    & IID & L/non-IID & F/non-IID & IID & L/non-IID & F/non-IID & IID & L/non-IID & F/non-IID \\
    \midrule
    Simsiam + CNN  & 86.03 & 84.33 (\textcolor{red}{$\boldsymbol{\downarrow}$}1.70)  & 84.62 (\textcolor{red}{$\boldsymbol{\downarrow}$}1.41) & 58.91 & 57.80 (\textcolor{red}{$\boldsymbol{\downarrow}$}1.11) & 57.81 (\textcolor{red}{$\boldsymbol{\downarrow}$}1.10) & 46.74 & 46.10 (\textcolor{red}{$\boldsymbol{\downarrow}$}0.64) & 46.41 (\textcolor{red}{$\boldsymbol{\downarrow}$}0.33) \\
    MAE + CNN      & 87.28 & 86.97 (\textcolor{red}{$\boldsymbol{\downarrow}$}\textbf{0.31}) & 
    86.17 (\textcolor{red}{$\boldsymbol{\downarrow}$}\textbf{1.11}) & 57.86 & 57.77 (\textcolor{red}{$\boldsymbol{\downarrow}$}\textbf{0.09}) & 57.20 (\textcolor{red}{$\boldsymbol{\downarrow}$}\textbf{0.66}) & 45.88 & 45.87 (\textcolor{red}{$\boldsymbol{\downarrow}$}\textbf{0.01}) & 45.80 (\textcolor{red}{$\boldsymbol{\downarrow}$}\textbf{0.08}) \\
    \hline
    Simsiam + ViT  & 72.32 & 69.50 (\textcolor{red}{$\boldsymbol{\downarrow}$}2.82)   & 70.66 (\textcolor{red}{$\boldsymbol{\downarrow}$}1.66) & 48.60 & 43.49 (\textcolor{red}{$\boldsymbol{\downarrow}$}5.11) & 43.07 (\textcolor{red}{$\boldsymbol{\downarrow}$}5.53)  & 61.97 & 59.86 (\textcolor{red}{$\boldsymbol{\downarrow}$}2.11)  & 59.13 (\textcolor{red}{$\boldsymbol{\downarrow}$}2.84) \\
    MAE + ViT      & 69.90 & 68.20 (\textcolor{red}{$\boldsymbol{\downarrow}$}\textbf{1.70})  & 69.32 (\textcolor{red}{$\boldsymbol{\downarrow}$}\textbf{0.58}) & 50.04 & 48.95 (\textcolor{red}{$\boldsymbol{\downarrow}$}\textbf{1.09}) & 49.60 (\textcolor{red}{$\boldsymbol{\downarrow}$}\textbf{0.44}) & 62.69 & 62.25 (\textcolor{red}{$\boldsymbol{\downarrow}$}\textbf{0.44}) & 62.51 (\textcolor{red}{$\boldsymbol{\downarrow}$}\textbf{0.18}) \\
    \bottomrule
\end{tabular}
\end{table}

\textbf{Impact of Average Connectivity on Non-IID Robustness}. We verify our second insight by setting up decentralized networks with different average connectivity $|\bar{A}|$. For the same $|\bar{A}|$, we consider two cases: (1) a general case where the number of neighbors $|A_i|$ varies across clients, and (2) a uniform case where all clients have the same connectivity, i.e., $\forall i \in [N], |A_i| =|\bar{A}|$. Additionally, we set up an FL scenario with 20 clients training in parallel per round. Figure \ref{fig:avg_connectivity} shows that Corollary \ref{corollary_avg_connectivity} is correct. We can observe that the fine-tuning accuracy of decentralized SSL increases with $|\bar{A}|$. Moreover, Figure \ref{fig:avg_connectivity} provides empirical evidence for Theorem \ref{theorem_robust_compare_2}. We find that pre-training in FL is no less robust than in DecL against heterogeneous data. Additional results using alternative consensus matrices for DecL are given in Appendix \ref{sec_exp_consensus}, and confirm the same robustness ordering.

\begin{figure}[t!]
\begin{center}
\begin{subfigure}[t]{0.38\linewidth}
    \centering
    \includegraphics[width=0.95\linewidth]{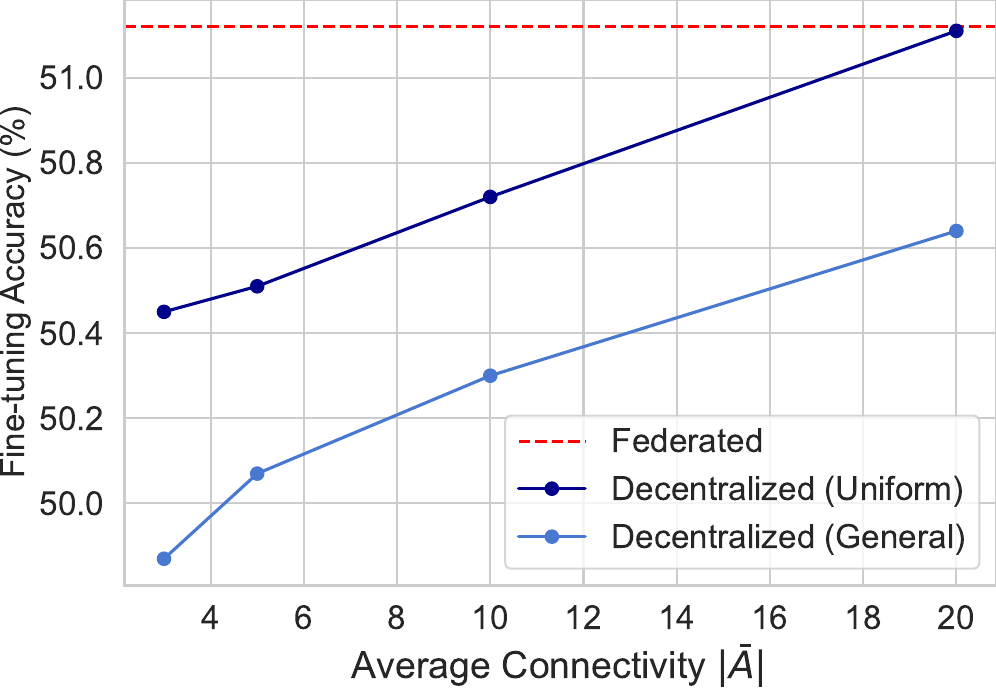}
    \caption{}
    \label{fig:avg_connectivity}
\end{subfigure}
\hfill
\begin{subfigure}[t]{0.58\linewidth}
    \centering
    \includegraphics[width=0.95\linewidth]{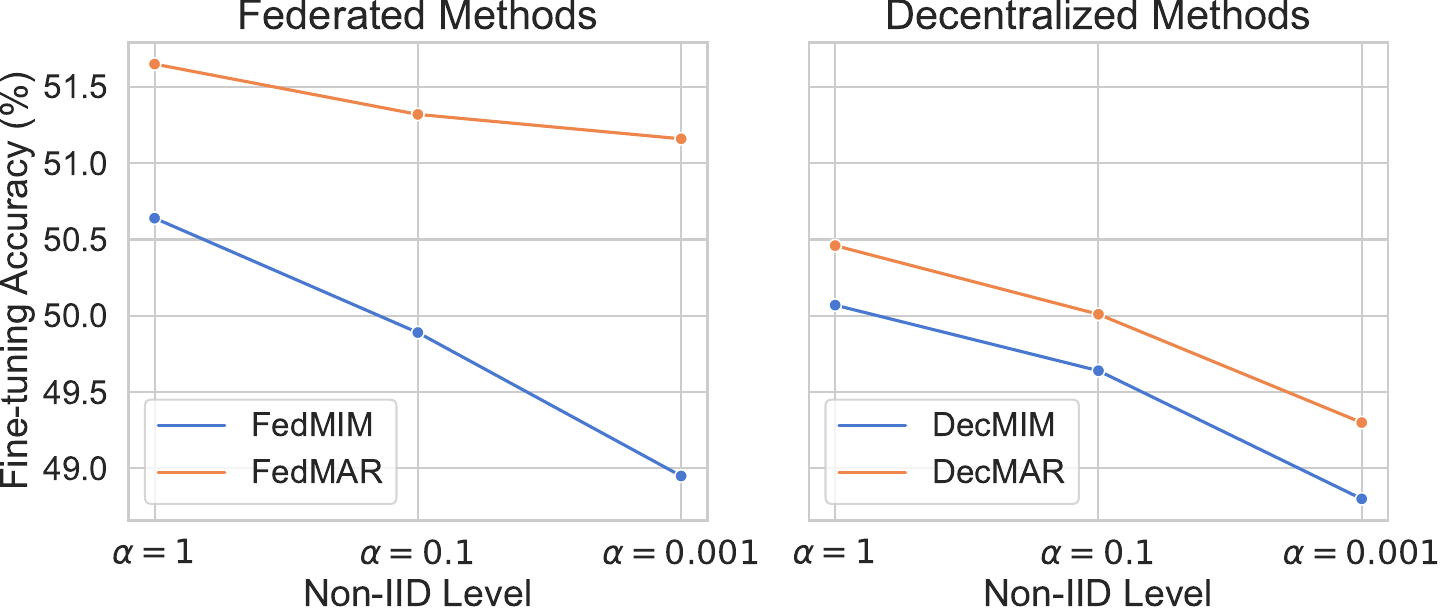}
    \caption{}
    \label{fig:mar_loss}
\end{subfigure}

\caption{(a) \textbf{Impact of the average connectivity between clients on the non-IID robustness.} Models are pre-trained in a network with 20 clients and then fine-tuned on CIFAR-100. The blue line shows the results of DecL, and the orange line shows FL results.  (b) \textbf{Comparison of MAR and MIM loss on robustness to data heterogeneity in federated and decentralized settings.}}
\label{fig:combined_ab}
\end{center}
\end{figure}

\textbf{Effectiveness of MAR loss.} To further illustrate the practical relevance of our analysis, we evaluate MAR loss against the standard MIM objective in both FL and DecL frameworks under varying degrees of heterogeneity. Figure \ref{fig:mar_loss} shows that, as the level of non-IIDness increases (i.e., the Dirichlet parameter $\alpha$ decreases from 1 to 0.001), the fine-tuning accuracy of all methods declines. Nevertheless, models pre-trained with MAR loss consistently outperform those trained with the vanilla MIM loss across all non-IID levels. This trend holds in both FL and DecL settings, suggesting that MAR loss can effectively reduce the sensitivity of distributed MIM to heterogeneity. 

Besides, to provide a more comprehensive evaluation, we extend the comparison to recent federated SSL baselines. We evaluate the effectiveness of our FedMAR by comparing it against several state-of-the-art (SOTA) federated self-supervised learning (F-SSL) baselines in a non-IID distributed setting. The SOTA baselines involve: \textbf{1) FedU} \citep{zhuang2021collaborative}: Using the divergence-aware predictor module for dynamic updates within the self-supervised BYOL network \citep{grill2020bootstrap}; \textbf{4) FedEMA} \citep{zhuang2021divergence}: Employing EMA of the global model to adaptively update online networks; \textbf{5) Orchestra} \citep{lubana2022orchestra}:  Combining clustering algorithms with Federated Learning for better model aggregation. \textbf{6) FeatARC} \citep{wang2022does}: Combing clustering techniques with feature alignment;  \textbf{7) LDAWA} \citep{rehman2023dawa}: Smartly aggregating models according to the angular divergence between local models; and \textbf{8) Fed$U^2$} \citep{liao2024rethinking}: Optimizing training with the flexible uniform regularizer and efficient unified aggregator.  Following prior works \citep{zhuang2021collaborative, rehman2023dawa}, we simulate a highly heterogeneous scenario with 100 clients sampled from a Dirichlet distribution with $\alpha=0.1$. In each round, 5 clients are randomly selected and each conducts 10 epochs of local training for 200 rounds in total. 

Since most baselines employ ResNet-18 \citep{he2016deep} as the backbone, we first implement FedMAR with ResNet-18 for a direct comparison. As shown in Table~\ref{tab:comparison}, FedMAR employed on ResNet-18 achieves higher accuracy on CIFAR-10 and CIFAR-100 while obtaining comparable results on ImageNet. This indicates that MAR loss can provide tangible improvements even when using the same CNN backbone as prior methods. To further examine the generality of MAR, we also evaluate FedMAR with a lightweight Vision Transformer backbone (Tiny-ViT). Importantly, this model has a comparable number of parameters and GFLOPs to ResNet-18, ensuring fairness in comparison. In this setting, FedMAR employed on Tiny-ViT achieves superior performance on all three benchmarks, surpassing CNN-based baselines while maintaining lower computational cost. These results suggest that MAR loss is not limited to convolutional architectures and can be particularly effective when applied to transformer-based models in federated self-supervised learning.

\begin{table}[t!]
\centering
\setlength{\tabcolsep}{4pt}
\caption{\textbf{Comparison of FedMAR with the state-of-the-art F-SSL methods on the Non-iid version ($\alpha=0.1$) under cross-device ($n=100$) settings.} Each method was pre-trained with Mini-ImageNet Dataset. The table shows the mean fine-tuning accuracy (\%) of three trials.}
\label{tab:comparison}
\begin{tabular}{l c c c | c | c | c }
\toprule
\textbf{Method} & \textbf{Architecture} & \textbf{Params} & \textbf{GFLOPS} 
& \textbf{CIFAR-10} 
& \textbf{CIFAR-100} 
& \textbf{ImageNet} \\
\midrule
FedU      & ResNet-18 & 38.47M & 7.40 & 72.02 & 38.44 & 65.10 \\
FedEMA     & ResNet-18 & 38.47M & 7.40 & 70.73 & 40.78 & 65.24 \\
Orchestra & ResNet-18 & 11.84M & 7.31 & 88.87 & 70.11 & 65.02 \\
FeatARC    & ResNet-18 & 11.70M & 1.83 & 89.60 & 64.11 & 68.17 \\
LDAWA     & ResNet-18 & 15.39M & 1.83 & 89.95 & 68.96 & 51.43 \\
Fed$U^2$      & ResNet-18 & 15.39M & 1.83 & 82.39 & 55.49 & 45.27 \\
\midrule
FedMAR(Ours)      & ResNet-18  & 22.50M & 3.64 & \textbf{92.70} & \textbf{70.82} & 65.36 \\
FedMAR(Ours)      & Tiny-ViT  & 11.60M & 0.88 & \textbf{90.03} & \textbf{71.28} & \textbf{75.99} \\
\bottomrule
\end{tabular}
\end{table}

\textbf{Extended Study on MAR.} In addition to the comparative experiments, we also conduct ablation studies on the main components of MAR loss, including the alignment term and its dynamic weighting. The detailed results of these analyses are provided in \ref{appendix_MAR_ablation_metric}, and \ref{appendix_MAR_ablation_gamma}, respectively. The reported results have confirmed that each component contributes to the performance improvement led by MAR loss. Finally, we assess the practical feasibility of MAR by analyzing its privacy concerns and communication overhead. Since MAR communicates only masked embeddings, the additional communication overhead remains modest, and raw data are never shared. Privacy is therefore also preserved through masking and can be further strengthened with differential privacy. A detailed discussion is reported in Appendix~\ref{appendix_MAR_privacy_communication}.

\section{Conclusion}

\label{sec_conclusion}


In this paper, we investigated the robustness of distributed self-supervised learning (D-SSL) under heterogeneous data. Our theoretical analysis shows that MIM-based frameworks achieve greater robustness than CL-based ones, and that the degree of robustness in decentralized learning is closely tied to the average network connectivity, with federated learning being no less robust than decentralized learning. These findings provide a principled foundation for understanding how algorithmic choices and network structures affect distributed learning with unlabeled and heterogeneous data. Beyond the theory, we also illustrated how such insights can inform practical design. As a case study, we introduced MAR loss, a refinement of the MIM objective with alignment regularization, which serves to demonstrate the applicability of our analysis. Extensive experiments across model architectures and distributed settings validate our theoretical predictions, and further confirm the utility of MAR loss in practice. We hope that our results can serve as a theoretical grounding and guiding framework for future developments in distributed self-supervised learning.

\subsubsection*{Acknowledgments}
This work is partly supported by the Australian Research Council Linkage Project (Grant No.\ LP220200893).

\bibliography{iclr2026_conference}
\bibliographystyle{iclr2026_conference}

\newpage
\appendix
\section{Appendix}

\subsection{Full Pseudocode of D-SSL with MAR loss}

\begin{algorithm}[h!]
\caption{FedMAR Algorithm}
\label{alg_fedmar}
\begin{algorithmic}[1]
\REQUIRE initial model $W^0$, number of local updates $E$, number of training rounds $T$, learning rate $\eta$, the upper bound of regularization weight $\gamma_{\max}$, the lower bound  $\gamma_{\min}$
\ENSURE optimized global model $W^T$
\FOR{$t = 0, \ldots, T-1$}
    \IF{$t = 0$}
        \STATE server broadcasts $W^t$ to $\mathcal{C} \sim [N]$ 
    \ELSE
        \STATE computes $\gamma_t^{(i)}$ by $\gamma_{\max}$ and $\gamma_{\min}$ on server (shown in Eq.(\ref{eq_dynamic_gamma}))
        \STATE server broadcasts $W^t, \bar{z}, \gamma_t^{(i)}$ to $\mathcal{C} \sim [N]$
    \ENDIF
     \FOR{client $i \in \mathcal{C}$ in parallel}
        \STATE $W_{i,0}^t \leftarrow W^t$
        \IF{$t = 0$}
            \STATE $W_{i,E}^t, z_i \leftarrow SGD(W_{i,0}^t, \eta, E, \mathcal{L}_{MIM})$
        \ELSE
            \STATE $W_{i,E}^t, z_i \leftarrow SGD(W_{i,0}^t, \eta, E, \mathcal{L}_{MAR}(\bar{z},\gamma_t^{(i)}))$ (shown in Eq.(\ref{eq_mar_loss}))
        \ENDIF
        \STATE sends $W_{i,E}^t, z_i$ to server 
    \ENDFOR 
    \STATE $\bar{z} = \frac{1}{|\mathcal{C}|} \sum_{i \in \mathcal{C}} z_i$
    \STATE $W^{t+1} \leftarrow \frac{1}{|\mathcal{C}|} \sum_{i \in \mathcal{C}} W_{i, E}^t$
\ENDFOR
\end{algorithmic}
\end{algorithm}

\begin{algorithm}[h!]
\caption{DecMAR Algorithm}
\label{alg_decmar}
\begin{algorithmic}[1]
\REQUIRE initial models $W^{-1}_{i,E}$, number of local updates $E$, number of training rounds $T$, learning rate $\eta$, the upper bound of regularization weight $\gamma_{\max}$, the lower bound  $\gamma_{\min}$
\ENSURE optimized global model $W^T$
\FOR{$t = 0, \ldots, T-1$}
     \FOR{client $i \in [N]$ in parallel}
        \IF{$t = 0$}
            \STATE send $W^{t-1}_{i,E}$ to its neighbors
        \ELSE
            \STATE computes $\gamma_t^{(i)}$ by $\gamma_{\max}$ and $\gamma_{\min}$ for each neighbor (shown in Eq.(\ref{eq_dynamic_gamma}))
            \STATE send $W^{t-1}_{i,E}, z_i, \gamma_t^{(i)}$ to its neighbors
            \STATE $\bar{z} = \frac{1}{|A_i|} \sum_{j \in A_i} z_j$
        \ENDIF
        \STATE $W^{t}_{i,0} \leftarrow \frac{1}{|A_i|} \sum_{j \in A_i} W_{j, 0}^{t-1}$
        \IF{$t = 0$}
            \STATE $W_{i,E}^t, z_i \leftarrow SGD(W_{i,0}^t, \eta, E, \mathcal{L}_{MIM})$
        \ELSE
            \STATE $W_{i,E}^t, z_i \leftarrow SGD(W_{i,0}^t, \eta, E, \mathcal{L}_{MAR}(\bar{z}, \gamma_t^{(i)}))$ (shown in Eq.(\ref{eq_mar_loss}))
        \ENDIF
    \ENDFOR 
\ENDFOR
\STATE $W^{T} \leftarrow \frac{1}{N} \sum_{i \in [N]} W_{i, E}^{T-1}$
\end{algorithmic}
\end{algorithm}

\newpage

\subsection{Details about Experiment Setup}

\label{appendix_experiment_setup}

In this section, we have provided two tables to present our experiment setup. Table \ref{tab1} shows the experiment details, which include the specific settings for the model architecture, dataset, scenario, and training. Table \ref{tab2} demonstrates the setup of the running environment, including the configuration of our test server.

\begin{table}[h!]
\centering
\caption{Settings of Experiments.}
\label{tab1}
\begin{tabular}{l c} \hline
 & Details \\
\hline
Model Architecture & ResNet, Vision Transformer (ViT) \\
Number of layers in ResNet & 18 \\
Number of blocks in ViT & 5 \\
Pre-train Method & MAE, Simsiam \\
Pre-train Dataset & Mini-ImageNet \\
Fine-tune Dataset & CIFAR-10/100, ImageNet \\
Non-IID Options (i.e. the value of $\alpha$) & $\{1e5 \text{ (IID)}, 1, 0.1, 0.01, 0.001\}$\\
Options for the $\gamma$ used in MAR loss & $\{1, 0.1, 0.01, 0.001\}$ \\
\hline
\textbf{For Federated Learning (FL):} & \\
Number of clients & $100$ \\
Number of sampled clients per round & $5$ \\
Number of local training epochs & $2$ \\
Number of total training rounds & $100$ \\
\hline
\textbf{For Decentralized Learning (DecL):} & \\
Number of clients & $20$ \\
Options for average connectivity & $3, 5, 10, 20 \text{ (equals to FL)}$ \\
Number of local training epochs & $1$ \\
Number of total training rounds & $25$ \\
\hline
Fine-tuning Epochs & 50/100 (CIFAR-10/100), 20/100 (ImageNet) \\ 
Pre-train Batch Size & 128 \\
Fine-tune Batch Size & 256 (CIFAR-10/100), 1024 (ImageNet) \\
Base Learning Rate & 1.5e-4\\
\hline
\end{tabular}
\vskip -0.1in
\end{table}

\begin{table}[h!]
\centering
\caption{Settings of Running Environment.}
\label{tab2}
\vskip 0.15in
\begin{tabular}{l c} \hline
Config & Details \\ [0.5ex] 
 \hline
 Server GPU Count & 4 \\
 Server GPU Type & RTX 3090 (24GB) \\ 
 Server CPU Type & AMD EPYC 7282 16-Core \\
 CUDA & 12.4 \\
 Framework & PyTorch \\ \hline
\end{tabular}
\vskip -0.1in
\end{table}

\newpage
\subsection{External Experiments}

\label{appendix_experiments}

\subsubsection{Feature Space Visualization and Model Difference}

\label{appendix_feature_space}

\begin{figure}[h!]
\centering
\includegraphics[width=0.99\linewidth]{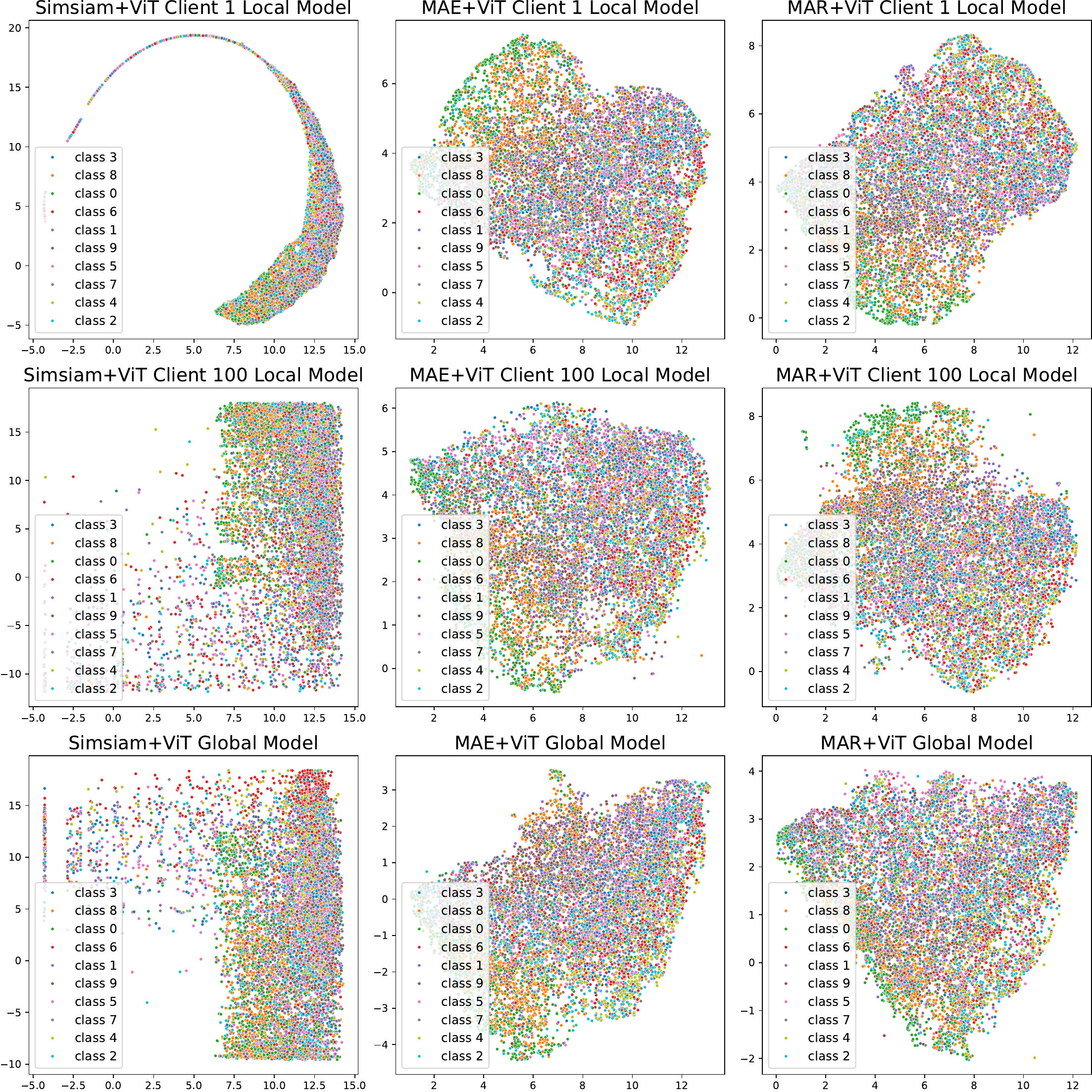}
\caption{\textbf{Visualization of the feature space of local and global model in Non-IID setting.} Each column stands for a D-SSL framework (i.e., pre-training ViT by Simsiam, pre-training ViT by MAE, and pre-training ViT by MAR). The first row shows the local feature space from client 1, the second row shows the local feature space from client 100, and the last row shows the global feature space. }
\label{fig:umap_compare}
\end{figure}

Besides Table \ref{table:finetune_gap} demonstrating the non-IID robustness of distributed CL and MIM by the gap in fine-tuning accuracy, we further explore the differences in their learned features empirically. Specifically, we simulate a heterogeneous setting with 100 clients using a Dirichlet sampling with $\alpha=0.1$. For each D-SSL framework, we obtain three pre-trained ViT backbones: (1) a global model trained using FL across all clients;  and (2) two local models trained solely on data from client 1 and client 100, respectively. To compare their learned feature spaces, we extract the encoder features of each model. These high-dimensional features are first projected to 20 dimensions using principal component analysis (PCA) and then embedded into 2D space using Umap \citep{mcinnes2018umap} for visualization.

Figure \ref{fig:umap_compare} presents the feature of local and global models learned by each D-SSL method. Each column corresponds to one method, while each row shows features from a specific model (client 1, client 100, and the global model). We observe that for distributed MIM methods, the local features are more aligned with each other and also closer to the global features, suggesting more consistent representations across heterogeneous clients. In contrast, distributed CL exhibits greater divergence between local and global features, indicating that it is inherently more sensitive to data heterogeneity.

To provide a more quantitative comparison, we also show the weight differences between local and global models in Table \ref{tab:l2_norm}. In particular, we compute the layer-wise $\ell_2$-norm difference between local and global model weights and report the sum across all layers. The results show that distributed MIM methods (MAE and MAR) yield significantly lower weight distances compared to distributed CL, reinforcing the observation that MIM leads to more stable and consistent model updates in the presence of non-IID data.

\begin{table}[h!]
\centering
\setlength{\tabcolsep}{4pt}
\caption{\textbf{Weight distance between local and global models learned from different D-SSL methods.}}
\label{tab:l2_norm}
\begin{tabular}{l | c c c }
\toprule
\textbf{$l_2$-Norm Difference} & \textbf{SimSiam + ViT} & \textbf{MAE + ViT} & \textbf{MAR + ViT} \\
\midrule
local 1 vs local 100    & 45.37 & 36.10 & \textbf{35.75} \\
local 1 vs global    & 40.34 & 31.57 & \textbf{31.38} \\
local 100 vs global  & 38.39 & 31.77 & \textbf{31.25} \\
\bottomrule
\end{tabular}
\end{table}

\subsubsection{Effect of Different Consensus Matrices on Robustness Theory}

\label{sec_exp_consensus}

\begin{table}[h!]
\centering
\small
\caption{\textbf{CIFAR-100 Accuracy (\%) of decentralized MIM under different consensus matrices.} Results are averaged over three test runs.}
\label{table:consensus_matrices}
\begin{tabular}{lcc}
\toprule
\textbf{Method} & \textbf{Avg Connectivity $\approx$ 5} & \textbf{Avg Connectivity $\approx$ 10} \\
\midrule
\textbf{FL (reference)} & \textbf{52.72} & \textbf{52.72} \\
DecL with Data Size Weights      & 46.62 & 52.28 \\
DecL with Degree Normalized       & 45.34 & 52.08 \\
DecL with Doubly Stochastic       & 45.22 & 52.18 \\
DecL with Push Sum                & 45.19 & 52.22 \\
\bottomrule
\end{tabular}
\end{table}

This experiment examines whether the robustness findings in Section \ref{sec_theory_avg_connectivity} remain valid under different choices of consensus matrices in decentralized learning. The theoretical bounds link robustness to the average connectivity of the network graph, while the average connectivity is closely related to how efficiently information mixes across clients. If various consensus rules produced qualitatively different mixing behavior, they could in principle affect robustness. To test this, we conduct two groups of experiments on decentralized networks with 20 clients under strong non-IID conditions using $\alpha=0.1$. In the first group, the network has an average connectivity of around 5, while in the second group, connectivity is increased to about 10. Within each group, we pre-train distributed MIM using four commonly adopted consensus schemes, including data size weighting, degree normalized averaging, doubly stochastic matrices, and push sum, and compare all results against a federated learning baseline with the same number of clients. The results in Table \ref{table:consensus_matrices} show a consistent pattern. When connectivity is low, all decentralized variants suffer a noticeable accuracy loss relative to federated learning, and the specific consensus rule makes only minor differences. When connectivity increases, all decentralized variants recover to a level that is close to the federated baseline yet never surpass it. These findings confirm that the qualitative ordering predicted by the theory persists. The choice of consensus matrix influences only constant factors in mixing but does not overturn the robustness relation that federated learning is at least as robust as decentralized learning.



\subsection{Ablation Studies on MAR}

\subsubsection{Ablation on Alignment Metric}
\label{appendix_MAR_ablation_metric}

\begin{table}[h!]
\centering
\caption{\textbf{Evaluation of different alignment metrics for MAR loss on CIFAR-100.} We report accuracy (\%) under three settings of fixed $\gamma$: $1\mathrm{e}{-1}$, $1\mathrm{e}{-2}$, and $0$ (degenerate to vanilla MIM).}
\label{table_metric_ablation}
\begin{tabular}{l|ccc}
    \hline
    \textbf{Metric} & $\gamma=1\mathrm{e}{-1}$ & $\gamma=1\mathrm{e}{-2}$ & $\gamma=0$ \\ 
    \hline
    Cosine Similarity & 51.71 & 52.47 & 51.45 \\
    Vanilla MMD ($\sigma=1$) & 51.79 & 52.12 & 51.45 \\
    A-MMD (median $\sigma$) & 52.42 & 54.13 & 51.45 \\
    A-MMD (mean $\sigma$) [Ours] & \textbf{54.09} & \textbf{54.39} & 51.45 \\
    \hline
\end{tabular}
\end{table}

Our MAR loss (Eq.~\ref{eq_mar_loss}) involves two key components: the dynamic regularization weight $\gamma_t$ and the A-MMD distributional penalty used to align local and global representations. To understand their impact, we perform ablation studies on each component. We first evaluate the contribution of the alignment metric.  

For baselines, we consider two commonly used choices in prior work: cosine similarity, which has been widely adopted in federated SSL studies for enforcing alignment between local and global feature spaces \citep{wang2022does}, and vanilla MMD with a fixed kernel bandwidth, which has also been explored in recent federated learning works \citep{ma2024enhancing, hu2024fedmmd, liao2024foogd}. On top of these, we evaluate our adaptive variant A-MMD, where the kernel bandwidth is chosen automatically based on either the median or mean of pairwise distances. As shown in Table~\ref{table_metric_ablation}, A-MMD consistently outperforms cosine similarity and vanilla MMD across different $\gamma$ values. Between the two adaptive variants, using the mean of pairwise distances provides slightly better performance, and we adopt this as our default design.  

\subsubsection{Ablation on Regularization Weight}
\label{appendix_MAR_ablation_gamma}

\begin{table}[h!]
\centering
\caption{\textbf{Evaluation of regularization weight $\gamma$ for MAR loss on CIFAR-100.}}
\label{table_gamma_ablation}
\begin{tabular}{l|c}
    \hline
    \textbf{Weight Schedule} & Acc(\%) \\
    \hline
    $\gamma = 1$ & 51.50 \\
    $\gamma = 1\mathrm{e}{-1}$ & 54.09 \\
    $\gamma = 1\mathrm{e}{-2}$ & 54.39 \\
    $\gamma = 1\mathrm{e}{-3}$ & 53.55 \\
    $\gamma: 1\mathrm{e}{-1} \rightarrow 1\mathrm{e}{-3}$ (cosine decay) & \textbf{54.91} \\
    \hline
\end{tabular}
\end{table}

Next, we analyze the impact of the regularization weight $\gamma$ by fixing the alignment metric to A-MMD. Results in Table~\ref{table_gamma_ablation} show that using a large weight ($\gamma=1$) degrades performance, as the alignment term overwhelms the reconstruction objective. Conversely, very small weights such as $\gamma=1\mathrm{e}{-3}$ reduce MAR to a near-vanilla MIM objective and fail to deliver sufficient robustness gains. Moderate fixed values such as $\gamma=1\mathrm{e}{-2}$ and $\gamma=1\mathrm{e}{-1}$ yield stronger results, but still remain below our proposed dynamic schedule.  

Notably, the cosine decay schedule that smoothly decreases $\gamma$ from $1\mathrm{e}{-1}$ to $1\mathrm{e}{-3}$ achieves the best performance (\textbf{54.91\%}). This validates our intuition behind dynamic weighting: stronger alignment is most beneficial in the early stage when client divergence is high, while gradually relaxing the weight avoids excessive penalty in later stages. These findings highlight the importance of the dynamic design in MAR loss, which not only achieves higher accuracy but also improves training stability.  

\subsection{Discussion on Privacy and Communication Overhead of MAR}
\label{appendix_MAR_privacy_communication}

When deploying MAR loss in practice, natural concerns arise regarding the potential privacy risks and the additional communication associated with sharing local representations. We provide both quantitative and qualitative analyses below to show that these costs remain modest and manageable.  

\textbf{Privacy considerations.}
The information communicated by MAR is limited to local representations $z_i=f_e(x_1)$ derived from the unmasked portion of the input. Because MIM typically adopts a high masking ratio (e.g., 75\% in MAE \citep{he2022masked}), most raw content remains hidden and the embedding dimensionality is substantially reduced, which mitigates potential leakage. For stronger guarantees, MAR can be further combined with standard Differential Privacy (DP) mechanisms \citep{mcmahan2017learning, wei2020federated} by perturbing embeddings before transmission, e.g., $z_i \leftarrow f_e(x_1) + \mathcal{N}(0,\sigma^2 I)$ with $\sigma$ calibrated to satisfy $(\epsilon,\delta)$-DP.

\textbf{Communication overhead.}
In addition to the standard model updates (e.g., gradients or weights), MAR transmits compact masked embeddings computed from the unmasked portion of each input. This is the sole extra payload introduced by MAR. For instance, in the MAE (ViT-B/16) setting on ImageNet with a 75\% masking ratio, each image has 196 patches, of which 49 remain visible. With hidden size 768 and batch size 256, this yields about $49 \times 768 \times 256$ float values ($\approx 36.8$\,MB in float32). By contrast, a full model with 86M parameters is $\approx 328$\,MB, so the additional cost from MAR is only $\sim\!11\%$ under this configuration. Crucially, in cross-device settings where small batches are common, this extra cost decreases proportionally with the batch size: at $B{=}128$ it is $\approx 18$\,MB ($\sim\!5\%$), at $B{=}64$ it is $\approx 9$\,MB ($\sim\!3\%$), and at $B{=}32$ it drops to around $\sim\!1\%$. These calculations indicate that the MAR-induced overhead remains acceptable in realistic deployments. Moreover, MAR is optional: when minimal communication is the overriding priority, one can simply use the standard MIM objective, whose effectiveness is explained by our theory, at zero additional cost. When a small extra cost is acceptable, MAR offers corresponding robustness gains while keeping the overhead low.

\newpage

\subsection{Full proof for Theoretical Analysis}

\label{appendix_proof}

\subsubsection{Formal Assumptions}
\label{sec_assumptions}
To make the theoretical analysis fully transparent and self-contained, we first summarize here all assumptions used in deriving the main results. These assumptions complement the problem setup in Section \ref{sec_problem_setup} and reflect the standard modeling choices commonly adopted in theoretical studies of distributed and self-supervised learning.

\begin{assumption}
\label{assum_graph}
(Communication Graph). The distributed system is modeled as a fixed and connected communication graph $\mathcal{G} = (\mathcal{V}, \mathcal{E})$ with $N = |\mathcal{V}|$ clients. Each client $i$ communicates only with its neighborhood $A_i = \{ j : (i,j)\in\mathcal{E} \} \cup \{i\}$. We assume $2 \le |A_i| \le N$ for all $i \in [N]$. The average neighborhood size $|\bar{A}| = \tfrac{1}{N}\sum_{i=1}^N |A_i|$ is used as a measure of connectivity. 
\end{assumption}
\begin{remark}
This formulation encompasses decentralized learning on arbitrary connected topologies and federated learning as the fully connected special case (i.e., by the help of the central server, all clients can indirectly communicate with each other so there exists $|A_i| = N$ for all $i\in[N]$).    
\end{remark}
\begin{assumption}
\label{assum_consensus}
(Consensus Weights). During decentralized aggregation, each client $i$ forms a mixing vector $w_i = \{w_{ij}\}_{j \in A_i}$ satisfying:
\begin{itemize}
    \item $w_{ij} > 0$ only if $j \in A_i$ (topology-respecting sparsity);
    \item $\sum_{j \in A_i} w_{ij} = 1$ (row-stochasticity).
\end{itemize}
\end{assumption}
\begin{remark}
The above conditions represent the standard requirements for decentralized model aggregation: each client averages only over its local neighborhood and the mixing vector is row-stochastic. This formulation covers commonly used consensus rules in decentralized optimization, including uniform averaging \citep{mcmahan2017communication}, degree-normalized weights \citep{lian2017can}, and symmetric doubly-stochastic schemes \citep{tang2022gossipfl}. Our analysis relies only on these basic structural properties, while more general
mixing operators could in principle be incorporated by extending the corresponding aggregation step. Exploring such extensions is an interesting direction for future work, but it is not required for the results presented here. 
\end{remark}
\begin{assumption}
\label{assum_non_degenerate}
(Non-degenerate Embedding). Throughout the analysis, we focus on non-trivial stationary points of the regularized objectives, where the embedding matrix 
$W \in \mathbb{R}^{c \times d}$ satisfies $W \neq 0$ and $\mathrm{rank}(W) = c$.
\end{assumption}
\begin{remark}
The trivial solution $W = 0$ does not minimize the reconstruction or 
alignment terms in either the MIM or CL objectives, and corresponds to a 
representation carrying no information. Therefore, this assumption is generally satisfied in the theoretical analysis of self-supervised learning \citep{liu2022self, wang2022does}.
\end{remark}
\begin{assumption}
\label{assum_local_independence}
(Independence Local Sampling.) For each client $i$, the local dataset $D_i$ consists of $|D_i|$ independent samples drawn from its local distribution 
$\mathcal{D}_i$.
\end{assumption}
\begin{remark}
The independence assumption is the minimal condition required for 
high-probability spectral norm bounds of empirical covariance matrices. 
It does not alter the established non-IID structure across clients, but ensures that 
the empirical covariance on each client concentrates around its population 
counterpart. This is a standard assumption in the theoretical analysis of 
distributed learning \citep{wang2022does, tang2022gossipfl}.    
\end{remark}
\begin{assumption}
\label{assum_H_CL}
(Dissimilar Image Transformation for CL).  When the two augmented views $(g_a(x), g_b(x))$ used in CL are generated from dissimilar transformations, we model $g_b(x)$ by a linear operator $H \in \mathbb{R}^{d \times d}$ acting on the input space.  In the theoretical analysis, $H$ enters only through the quadratic form $x^{\top} H x$, and therefore only its symmetric component $H_{\mathrm{sym}} = (H + H^{\top})/2$ is relevant.  Let $\mathcal{S} = \mathrm{span}\{e_1, \ldots, e_c\}$ denote the class-dependent subspace in the non-IID generative model, and let $P$ be the orthogonal 
projection onto $\mathcal{S}$. We assume that
\[
\mathrm{tr}(P H_{\mathrm{sym}} P) > 0.
\]
\end{assumption}
\begin{remark}
The above condition ensures that the transformation $H$ preserves nontrivial energy on the class-dependent semantic subspace $\mathcal{S}$.  It is a mild requirement and is common to hold in standard contrastive learning augmentations \citep{chen2020simple, chen2021exploring}, including rotations, flips, translations, crops, blurs, and color jittering. These transformations perturb the input in ways that do not cancel class-discriminative directions, so $\mathrm{tr}(P H_{\mathrm{sym}} P)$ remains strictly positive in practice.  
\end{remark}

\subsubsection{Learned Representability for Decentralized MIM}

\label{appendix_proof_for_MIM}

This section provides the full proof of Theorem \ref{theorem_rep_DecMIM}.

\begin{proof}

We begin by formulating the representability of local representation. Then, we derive the global representation based on the local feature. Since FL is different from decentralized learning in the updates, we establish the global representation for each distributed framework, respectively.   

$\textbf{For local feature space.}$ According to the alignment-style loss function of MIM shown in Eq.(\ref{eq_new_MIM_loss}) and by the definition of Kronecker product, we have

\begin{equation}
\begin{aligned}
\mathcal{L}_{MIM} &= - \mathbb{E}[|(W (x \odot m))^{\intercal}(W(x \odot (1-m)))] + \frac{1}{2}||W^{\intercal}W||^2_F \\
&= - \mathbb{E}[(W (\text{diag} (\text{vec}(x)) \cdot \text{vec}(m)))^{\intercal}(W(\text{diag} (\text{vec}(x)) \cdot \text{vec}( 1-m)))] + \frac{1}{2}||W^{\intercal}W||^2_F.
\end{aligned}
\end{equation}

Define 
\begin{equation}
    a = \mathrm{diag}(\mathrm{vec}(x))\cdot\mathrm{vec}(m), \qquad
b = \mathrm{diag}(\mathrm{vec}(x))\cdot\mathrm{vec}(1-m),
\end{equation}
so that the above loss becomes
\begin{equation}
    \mathcal{L}_{MIM} = -\mathbb{E}[a^\intercal W^\intercal Wb] + \frac{1}{2}||W^{\intercal}W||^2_F.
\end{equation}
Using the fact that
\begin{equation}
\label{eq_gradient_deriviation_start}
  a^\intercal W^\intercal Wb = \text{tr}(a^\intercal W^\intercal Wb)  = \text{tr}( W^\intercal Wba^\intercal) = \text{tr}( Wba^\intercal W^\intercal),
\end{equation}
we obtain 
\begin{equation}
\frac{\partial}{\partial W}\bigl(a^\intercal W^\intercal Wb\bigr)=\frac{\partial }{\partial W}(\text{tr}( Wba^\intercal W^\intercal))=W(ba^{\intercal}+ab^{\intercal}).    
\end{equation}
Together with 
\begin{equation}
\label{eq_gradient_WTW}
    \frac{\partial (\frac{1}{2}\|W^{\top}W\| _ {F}^{2})}{\partial W}=2WW^{\intercal}W,
\end{equation}
the gradient of the complete objective becomes
\begin{equation}
\label{eq_gradient_deriviation_end}
   \frac{\partial\mathcal{L} _ {\mathrm{MIM}}}{\partial W}=-W\mathbb{E}\bigl[ba^{\intercal}+ab^{\intercal}\bigr]+2WW^{\intercal}W. 
\end{equation}



Setting the gradient to zero yields
\begin{equation}
\label{eq_mim_gradient_zero}
    W\mathbb{E} \left[ ba^{\intercal} + ab^{\intercal} \right] = 2WW^\intercal W.
\end{equation}
Under Assumption \ref{assum_non_degenerate}, multiplying both sides on the left by the Moore–Penrose pseudoinverse $W^{+}$ reduces Eq.(\ref{eq_mim_gradient_zero}) to the stationary condition
\begin{equation}
\label{eq_stationary_MIM}
     \frac{1}{2}\mathbb{E} \left[ ba^{\intercal} + ab^{\intercal} \right] = W^\intercal W.
\end{equation}

Let $X_i^M$ represent the left-hand side of this equation. Consider the binary matrix $m$ used for masking is sampled uniformly from the binomial distribution with a probability $p$, we establish

\begin{equation}
\begin{aligned}
X_{i}^{M}&= \frac{1}{2}\mathbb{E} [ \mathrm{diag}(\mathrm{vec}(x))\mathrm{vec}(1-m) \mathrm{vec}(m)^{\intercal} \mathrm{diag}(\mathrm{vec}(x))^{\intercal} \\
&\quad + \mathrm{diag}(\mathrm{vec}(x))\mathrm{vec}(m) \mathrm{vec}(1-m)^{\intercal} \mathrm{diag}(\mathrm{vec}(x))^{\intercal}]\\
&=\frac{2p\left( 1-p \right)}{\left| D_i \right|}\sum_{j=1}^{\left| D_i \right|}{\left( \text{diag}\left( \text{vec}\left( x_{i,j} \right) \right) \text{diag}\left( \text{vec}\left( x_{i,j} \right) \right)^{\intercal} \right)},
\end{aligned}
\end{equation}

where $\mathbb{E}_{x \sim D_i} [x x^\intercal] = \frac{1}{\left| D_i \right|}\sum_{j=1}^{\left| D_i \right|}{\left( \text{diag}\left( \text{vec}\left( x_{i,j} \right) \right) \text{diag}\left( \text{vec}\left( x_{i,j} \right) \right)^\intercal \right)}$ denotes the empirical covariance matrix for the learning with local dataset on client $i$. Based on the setup of data generation in Section \ref{sec_problem_setup}, we also derive the following expectation of $X_{i}^{M}$ with $\tau = d^{\frac{1}{5}}$ and $\mu = d^{-\frac{1}{5}}$:

\begin{equation}
\label{eq_x_i^M}
\begin{aligned}
&\mathbb{E}\left[ X_{i}^{M} \right] = \text{diag}\\
&\Bigg( \underset{N\,\,\text{ter}ms}{\underbrace{2p\left( 1-p \right) \tau ^2+O\left( d^{-\frac{2}{5}} \right) ,\underset{i^{th}\,\,\text{term}}{...,\underbrace{2p\left( 1-p \right) +O\left( d^{-\frac{2}{5}} \right) }},...,\,\,2p\left( 1-p \right) \tau ^2+O\left( d^{-\frac{2}{5}} \right) }}, \\
& \underset{d-N\,\,\text{ter}ms}{\underbrace{O\left( d^{-\frac{2}{5}} \right) ,...,O\left( d^{-\frac{2}{5}} \right) }} \Bigg) \\
&=\text{diag} \bigg( 2p\left( 1-p \right) d^{\frac{2}{5}}+O\left( d^{-\frac{2}{5}} \right) ,...,2p\left( 1-p \right) +O\left( d^{-\frac{2}{5}} \right) , \\
&\quad\quad\quad\quad ..., 2p\left( 1-p \right) d^{\frac{2}{5}}+O\left( d^{-\frac{2}{5}} \right) ,...,O\left( d^{-\frac{2}{5}} \right) \bigg) 
\end{aligned}
\end{equation}

Next, consider the fact that up to a positive scaling and an additive constant, the regularized MIM objective can be rewritten as the Frobenius-norm objective $\mathcal{L}(W) = \lVert X_{i}^{M}-W^{\intercal}W \rVert _{F}^{2}.$ Thus, minimizing $\mathcal{L}_{\mathrm{MIM}}$ solves the Frobenius-norm best rank-$c$ approximation problem for $X_i^M$. According to the Eckart-Young-Mirsky theorem \citep{eckart1936approximation}, we notice that the row span of the optimal $W \in \mathbb{R}^{c \times d}$ is the span of the eigenvectors corresponding to the first $c$ eigenvalues of $X_i^M$. Denoting the set of orthonormal eigenvectors of $X_i^M$ as $\left\{ v_{i,1}^M,...,v_{i,d}^M \right\} $, we have $X_i^M =\sum_{j=1}^d{\lambda _{i,j}v_{i,j}^{M}}(v_{i,j}^{M})^{\intercal}$, where $\lambda_{i,j} := \lambda_j (X_i^M)$ is the j-th largest eigenvalue of $X_i^M$. Therefore, the inequality below is satisfied:

\begin{equation}
\label{eq_inequality_x_i^M}
\begin{aligned}
    e_{k}^{\intercal}X_{i}^{M}e_k &= e_{k}^{\intercal}  \left (\sum_{j=1}^d{\lambda _{i,j}v_{i,j}^{M}(v_{i,j}^{M})^{\intercal}} \right) e_{k} \\
    &= \sum_{j=1}^d \lambda _{i,j} ({e_{k}^{\intercal} v_{i,j}^{M}})^2 \\ 
    &\leq \lambda _{i,1}^{M}\sum_{j=1}^d ({e_{k}^{\intercal} v_{i,j}^{M}})^2,
\end{aligned}
\end{equation}

for any $e_k$ with $k \in \left[ N \right] \backslash \{i\}$. On the other hand, under the data construction described in Section \ref{sec_noniid_setup}, the number of samples on each client equals to the sum of the samples from frequent classes 
and the rare class. Since each of the two frequent classes grows in polynomials of $d$, while the amount of data from the rare class is $O(d^{\alpha})$ with $\alpha \in (0,1)$, the local sample size satisfies $|D_i|=\Theta(d^{\beta})$ with $\beta \geq 1$. Based on this sufficiently large sample size and Assumption \ref{assum_local_independence}, the matrix concentration bounds \citep{vershynin2018high} implies that the spectral norm satisfies $\lVert X_{i}^{M}-\mathbb{E}\left[ X_{i}^{M} \right] \rVert _2\le O\left( d^{-\frac{2}{5}} \right) $ with probability at least $1-\frac{1}{2}e^{-d^\frac{1}{10}}$. Building on Weyl’s inequality, we obtain that with high probability,

\begin{equation}\label{M:3}
\begin{aligned}
\left| \lambda _{i,k}^{M}-\lambda _k\mathbb{E}\left[ X_{i}^{M} \right] \right|\le \lVert X_{i}^{M}-\mathbb{E}\left[ X_{i}^{M} \right] \rVert _2\le O\left( d^{-\frac{2}{5}} \right).
\end{aligned}
\end{equation}

By combining Eqs.(\ref{eq_x_i^M}), (\ref{eq_inequality_x_i^M}) and (\ref{M:3}), we can derive the below lower bound for $e_{k}^{\intercal}X_i^M e_k$:

\begin{equation}
\label{eq_exe_M_1}
\begin{aligned}
&e_{k}^{\intercal}X_{i}^{M}e_k=e_{k}^{\intercal}\mathbb{E}\left[ X_{i}^{M} \right] e_k+e_{k}^{\intercal}\left[ X_{i}^{M}-\mathbb{E}\left[ X_{i}^{M} \right] \right] e_k
\\
&\ge 2p\left( 1-p \right) d^{\frac{2}{5}}+O\left( d^{-\frac{2}{5}} \right) -\lVert X_{i}^{M}-\mathbb{E}\left[ X_{i}^{M} \right] \rVert \\
&\ge 2p\left( 1-p \right) d^{\frac{2}{5}}-O\left( d^{-\frac{2}{5}} \right),
\end{aligned}
\end{equation}

which is led by the fact that $\lVert X \rVert _{\max}\le \lVert X \rVert$ for symmetric $X$. Likewise, we prove the upper bound as follows:

\begin{equation}\label{M:8}
\begin{aligned}
&e_{k}^{\intercal}X_{i}^{M}e_k=e_{k}^{\intercal}\mathbb{E}\left[ X_{i}^{M} \right] e_k+e_{k}^{\intercal}\left[ X_{i}^{M}-\mathbb{E}\left[ X_{i}^{M} \right] \right] e_k
\\
&\le 2p\left( 1-p \right) d^{\frac{2}{5}}+O\left( d^{-\frac{2}{5}} \right) +\lVert X_{i}^{M}-\mathbb{E}\left[ X_{i}^{M} \right] \rVert \\
&\le 2p\left( 1-p \right) d^{\frac{2}{5}}+O\left( d^{-\frac{2}{5}} \right).
\end{aligned}
\end{equation}

Moreover, we notice from Eqs.(\ref{eq_x_i^M}) and (\ref{eq_inequality_x_i^M}) that the below statements hold for $\lambda _{i,1}^{M}$:

\begin{equation}
\label{eq_lambda_M}
\begin{aligned}
     &\lambda _{i,1}^{M} \ge \lambda _1\left( \mathbb{E}\left[ X_{i}^{M} \right] \right) -O\left( d^{-\frac{2}{5}} \right) \ge 2p\left( 1-p \right) d^{\frac{2}{5}}-O\left( d^{-\frac{2}{5}} \right) \\
     & \lambda _{i,1}^{M} \le \lambda _1\left( \mathbb{E}\left[ X_{i}^{M} \right] \right) +O\left( d^{-\frac{2}{5}} \right) =2p\left( 1-p \right) d^{\frac{2}{5}}+O\left( d^{-\frac{2}{5}} \right).
\end{aligned}
\end{equation}

With Eqs.(\ref{eq_exe_M_1}) - (\ref{eq_lambda_M}), we further establish

\begin{equation}
\begin{aligned}
&\sum_{j=1}^d{ ({e_{k}^{\intercal} v_{j}^{M}})^2 }\ge \frac{2p\left( 1-p \right) d^{\frac{2}{5}}-O\left( d^{-\frac{2}{5}} \right)}{2p\left( 1-p \right) d^{\frac{2}{5}}+O\left( d^{-\frac{2}{5}} \right)}
\\
&= \frac{2p\left( 1-p \right) d^{\frac{2}{5}}+O\left( d^{-\frac{2}{5}} \right)}{2p\left( 1-p \right) d^{\frac{2}{5}}+O\left( d^{-\frac{2}{5}} \right)}-\frac{2O\left( d^{-\frac{2}{5}} \right)}{2p\left( 1-p \right) d^{\frac{2}{5}}+O\left( d^{-\frac{2}{5}} \right)}
\\
&=1-\frac{O\left( d^{-\frac{2}{5}} \right)}{2p\left( 1-p \right) d^{\frac{2}{5}}+O\left( d^{-\frac{2}{5}} \right)}.
\end{aligned}
\end{equation}

This completes the proof for local representation.

\textbf{For global feature space.} Since the local goal can be equivalently re-formulated as $\lVert X_{i}^{M}-W^{\intercal}W \rVert _{F}^{2}$, by Assumptions \ref{assum_graph} and \ref{assum_consensus}, we re-write the global goal of D-SSL for DecL framework (shown in Eq.(\ref{eq_distributed_global})) as 
\begin{equation}\label{M:12}
\begin{aligned}
\underset{W}{\min}\frac{1}{N}\sum_{i\in \left[ N \right]}\frac{1}{\left| A_i \right|}{\sum_{j\in A_i}{\lVert X_{j}^{M}-W^{\intercal}W \rVert _{F}^{2}}}.
\end{aligned}
\end{equation}

Note that the following function holds the same minimizer as Eq.(\ref{M:12}):
\begin{equation}\label{M:13}
\begin{aligned}
& \underset{W}{\min} \lVert \frac{1}{N} \sum_{i\in \left[ N \right]}\frac{1}{\left| A_i \right|}{\sum_{j\in A_i}{X_{j}^{M}-W^{\intercal}W}} \rVert _{F}^{2}
\\
&= \underset{W}{\min} \lVert \frac{1}{N} \sum_{i\in \left[ N \right]}{\overline{X_{i}^{M}}}-W^{\intercal}W \rVert _{F}^{2}
\\
&= \underset{W}{\min} \lVert \overline{X^M}-W^{\intercal}W \rVert _{F}^{2},
\end{aligned}
\end{equation}

where $\overline{X_{i}^{M}} = \sum_{j\in A_i} \frac{1}{\left| A_i \right|} X_{j}^{M}$ denotes the empirical covariance matrix for training with the local datasets across the local datasets on client $i$ and its neighbors. So, finding the optimal $W$ for DecL is equivalent to solving Eq.(\ref{M:13}). Following the derivation of Eq.(\ref{eq_x_i^M}) and linearity of expectation, we establish
\begin{equation}
\label{eq_x_i_M_decl}
\begin{aligned}
&\mathbb{E}\left( \overline{X_{i}^{M}} \right) = \text{diag}
\\
&\bigg( \underset{N\,\,\text{terms}}{\underbrace{...,\underset{j \in A_i \backslash i}{\underbrace{2p\left( 1-p \right) \left( \left( 1-\frac{1}{\left| A_i \right|} \right) d^{\frac{2}{5}}+\frac{1}{\left| A_i \right|} \right) +O\left( d^{-\frac{2}{5}} \right) }},...,\underset{i^{th} \text{term}}{\underbrace{2p\left( 1-p \right) d^{\frac{2}{5}}+O\left( d^{-\frac{2}{5}} \right) }},...,}}\,\, \\
&\quad \underset{d-N\,\,\text{terms}}{\underbrace{O\left( d^{-\frac{2}{5}} \right) ,...,O\left( d^{-\frac{2}{5}} \right) }} \bigg),
\end{aligned}
\end{equation}
where we prove with the fact that
\begin{equation}
\begin{aligned}
&\frac{\left( \left| A_i \right|-1 \right) 2p\left( 1-p \right) d^{\frac{2}{5}}+2p\left( 1-p \right) +\left| A_i \right|O\left( d^{-\frac{2}{5}} \right)}{\left| A_i \right|} \\
&=\frac{\left( \left| A_i \right|-1 \right) 2p\left( 1-p \right) d^{\frac{2}{5}}+2p\left( 1-p \right)}{\left| A_i \right|}+O\left( d^{-\frac{2}{5}} \right) 
\\
&=2p\left( 1-p \right) \left( 1-\frac{1}{\left| A_i \right|} \right) d^{\frac{2}{5}}+2p\left( 1-p \right) \frac{1}{\left| A_i \right|}+O\left( d^{-\frac{2}{5}} \right) \\
&=2p\left( 1-p \right) \left( \left( 1-\frac{1}{\left| A_i \right|} \right) d^{\frac{2}{5}}+\frac{1}{\left| A_i \right|} \right) +O\left( d^{-\frac{2}{5}} \right).
\end{aligned}
\end{equation}

With Eq.(\ref{eq_x_i_M_decl}), we can also have

\begin{equation}
\begin{aligned}
&\mathbb{E} \left( \overline{X^M} \right) = \text{diag}
\\
&\bigg( \underset{N\,\,terms}{\underbrace{2p\left( 1-p \right) \left( 1-\frac{1}{\left| \bar{A} \right|} \right)  d^{\frac{2}{5}}  +O\left( d^{-\frac{9}{20}} \right) ,...,2p\left( 1-p \right) \left( 1-\frac{1}{\left| \bar{A} \right|} \right)  d^{\frac{2}{5}} +O\left( d^{-\frac{9}{20}} \right) }}, \\
&\quad ...,O\left( d^{-\frac{2}{5}} \right) \bigg) 
\end{aligned}
\end{equation}
where we consider $\frac{1}{N}\sum_{i=1}^N\frac{1}{\left| A_i \right|} = \left| \bar{A} \right|$ and the fact that
\begin{equation}
\begin{aligned}
&\frac{\sum_{i=1}^N{\left( 2p\left( 1-p \right) \left( \left( 1-\frac{1}{\left| A_i \right|} \right) d^{\frac{2}{5}}+\frac{1}{\left| A_i \right|} \right) +O\left( d^{-\frac{2}{5}} \right) \right)}}{N} \\
&= 2p\left( 1-p \right) \left( \left( 1-\frac{1}{N}\sum_{i=1}^N\frac{1}{\left| A_i \right|} \right) d^{\frac{2}{5}}+\frac{1}{N}\sum_{i=1}^N\frac{1}{\left| A_i \right|} \right) +O\left( d^{-\frac{2}{5}} \right)
\\
&= 2p\left( 1-p \right) \left( \left( 1-\frac{1}{\left| \bar{A} \right|} \right) d^{\frac{2}{5}} + \frac{1}{\left| \bar{A} \right|} \right)  +O\left( d^{-\frac{2}{6}} \right) 
\\
&=2p\left( 1-p \right) \left( 1-\frac{1}{\left| \bar{A} \right|} \right)   d^{\frac{2}{5}}  +O\left( d^{-\frac{2}{5}} \right). 
\end{aligned}
\end{equation}

Through similar proof from Eq.(\ref{eq_exe_M_1}) to Eq.(\ref{eq_lambda_M}), we prove that the following statements hold for all $i \in [N]$:

\begin{equation}
\begin{aligned}
& e_{k}^{\intercal}\overline{X^M}e_k\ge 2p\left( 1-p \right) \left( 1-\frac{1}{\left| \bar{A} \right|} \right)  d^{\frac{2}{5}} -O\left( d^{-\frac{2}{5}} \right)  \\
& e_{k}^{\intercal}\overline{X^M}e_k\le 2p\left( 1-p \right) \left( 1-\frac{1}{\left| \bar{A} \right|} \right) d^{\frac{2}{5}} +O\left( d^{-\frac{2}{5}} \right)
\end{aligned}
\end{equation}
\begin{equation}
\begin{aligned}
&\lambda _{i,1}^{M}\ge \lambda _1\left( \mathbb{E}\left[ \overline{X^M} \right] \right) +O\left( d^{-\frac{2}{5}} \right) =2p\left( 1-p \right) \left( 1-\frac{1}{\left| \bar{A} \right|} \right)  d^{\frac{2}{5}} -O\left( d^{-\frac{2}{5}} \right) \\
&\lambda _{i,1}^{M}\le \lambda _1\left( \mathbb{E}\left[ \overline{X^M} \right] \right) +O\left( d^{-\frac{2}{5}} \right) =2p\left( 1-p \right) \left( 1-\frac{1}{\left| \bar{A} \right|} \right)  d^{\frac{2}{5}} +O\left( d^{-\frac{2}{5}} \right),
\end{aligned}
\end{equation}

which then implies:

\begin{equation}\label{M,1}
\begin{aligned}
&\sum_{j=1}^d{ ({e_{k}^{\intercal} \bar{v}_{j}^{M}})^2}\ge \frac{2p\left( 1-p \right) \left( 1-\frac{1}{\left| \bar{A} \right|} \right)  d^{\frac{2}{5}} -O\left( d^{-\frac{2}{5}} \right)}{2p\left( 1-p \right) \left( 1-\frac{1}{\left| \bar{A} \right|} \right)  d^{\frac{2}{5}} +O\left( d^{-\frac{2}{5}} \right)}
\\
&=\frac{2p\left( 1-p \right) \left( 1-\frac{1}{\left| \bar{A} \right|} \right)  d^{\frac{2}{5}} +O\left( d^{-\frac{2}{5}} \right)}{2p\left( 1-p \right)\left( 1-\frac{1}{\left| \bar{A} \right|} \right)  d^{\frac{2}{5}} +O\left( d^{-\frac{2}{5}} \right)}-\frac{2O\left( d^{-\frac{2}{5}} \right)}{2p\left( 1-p \right) \left( 1-\frac{1}{\left| \bar{A} \right|} \right)  d^{\frac{2}{5}} +O\left( d^{-\frac{2}{5}} \right)}
\\
&=1-\frac{O\left( d^{-\frac{2}{5}} \right)}{2p\left( 1-p \right) \left( 1-\frac{1}{\left| \bar{A} \right|} \right)  d^{\frac{2}{5}} +O\left( d^{-\frac{2}{5}} \right)}.
\end{aligned}
\end{equation}

The proof for the global featured space learned in the decentralized learning framework has been completed. Next, consider federated learning (FL) as a special case of decentralized learning with $\forall i \in [N], \left| A_i \right|=N$. The global of FL is thus:
\begin{equation}
\begin{aligned}
\underset{W}{\min}\frac{1}{N}\sum_{i\in \left[ N \right]}{{\lVert X_{i}^{M}-W^{\intercal}W \rVert _{F}^{2}}}.
\end{aligned}
\end{equation}

This is similar to solving 
\begin{equation}
\begin{aligned}
\underset{W}{\min}{{\lVert \overline{X^{M}}-W^{\intercal}W \rVert _{F}^{2}}},
\end{aligned}
\end{equation}

where $\overline{X^{M}} := \frac{1}{N}\sum_{i\in \left[ N \right]} X_{i}^{M}$ denotes the empirical covariance matrix for learning with the global dataset. Then, we derive 
\begin{equation}\label{FY-fir}
\begin{aligned}
\mathbb{E} \left( \overline{X^{M}} \right) &=\text{diag} \bigg( 2p\left( 1-p \right) d^{\frac{2}{5}}-\varTheta \left( d^{\frac{7}{20}} \right) +O\left( d^{-\frac{2}{5}} \right) ,..., \\
&\quad  2p\left( 1-p \right) d^{\frac{2}{5}}-\varTheta \left( d^{\frac{7}{20}} \right) +O\left( d^{-\frac{2}{5}} \right) ,...O\left( d^{-\frac{2}{5}} \right) \bigg) 
\end{aligned}
\end{equation}
where we adopt $N = \varTheta(d^{\frac{1}{20}})$ have used the fact that
\begin{equation}
\begin{aligned}
&\frac{\left( N-1 \right) 2p\left( 1-p \right) d^{\frac{2}{5}}+2p\left( 1-p \right) +NO\left( d^{-\frac{2}{5}} \right)}{N} \\
&=\frac{\left( \varTheta \left( d^{\frac{1}{20}} \right) -1 \right) 2p\left( 1-p \right) d^{\frac{2}{5}}+2p\left( 1-p \right)}{\varTheta \left( d^{\frac{1}{20}} \right)}+O\left( d^{-\frac{2}{5}} \right) 
\\
&=2p\left( 1-p \right) \left( 1-\varTheta \left( d^{-\frac{1}{20}} \right) \right) d^{\frac{2}{5}}+\varTheta \left( d^{-\frac{1}{20}} \right) +O\left( d^{-\frac{2}{5}} \right) \\
&=2p\left( 1-p \right) d^{\frac{2}{5}}-\varTheta \left( d^{\frac{7}{20}} \right) +O\left( d^{-\frac{2}{5}} \right).
\end{aligned}
\end{equation}
Again, by similar arguments from Eq.(\ref{eq_exe_M_1}) to Eq.(\ref{eq_lambda_M}), we further prove
\begin{equation}\label{FY-end}
\begin{aligned}
&\sum_{j=1}^d{({e_{k}^{\intercal} \bar{v}_{j}^{M}})^2}\ge \frac{2p\left( 1-p \right) d^{\frac{2}{5}}-\varTheta \left( d^{\frac{7}{20}} \right) -O\left( d^{-\frac{2}{5}} \right)}{2p\left( 1-p \right) d^{\frac{2}{5}}-\varTheta \left( d^{\frac{7}{20}} \right) +O\left( d^{-\frac{2}{5}} \right)}
\\
&= \frac{2p\left( 1-p \right) d^{\frac{2}{5}}-\varTheta \left( d^{\frac{7}{20}} \right) +O\left( d^{-\frac{2}{5}} \right)}{2p\left( 1-p \right) d^{\frac{2}{5}}-\varTheta \left( d^{\frac{7}{20}} \right) +O\left( d^{-\frac{2}{5}} \right)}-\frac{2O\left( d^{-\frac{2}{5}} \right)}{p\left( 1-p \right) d^{\frac{2}{5}}-\varTheta \left( d^{\frac{7}{20}} \right) +O\left( d^{-\frac{2}{5}} \right)}
\\
&=1-\frac{O\left( d^{-\frac{2}{5}} \right)}{2p\left( 1-p \right)d^{\frac{2}{5}} -\varTheta \left( d^{\frac{7}{20}} \right) +O\left( d^{-\frac{2}{5}} \right)},
\end{aligned}
\end{equation}

which completes the proof of this theorem.

\end{proof}

\newpage

\subsubsection{Learned Representability for decentralized contrastive learning}

This section provides the full proof of Theorem \ref{theorem_rep_DecCL}.

\begin{lemma}
\label{lemma_rep_DecCL_similar}
(Representability of Distributed CL under Similar Augmentations). Consider the same distributed scenario in Theorem \ref{theorem_rep_DecMIM}. For distributed SSL that utilizes Contrastive Learning (CL) in pre-training and generate positive pairs through similar augmentations, with a high probability, the following statements hold:
\begin{enumerate}
    \item Let $r_i^C = [r_{i,1}^C, \ldots, r_{i,c}^C]^\intercal$ be the local RV learned on client $i$. If positive pairs are generated by similar augmentations, we have $1-\frac{O( d^{-\frac{1}{5}} )}{d^{\frac{2}{5}}+O( d^{-\frac{1}{5}})} \leq r_{i,k}^C \leq 1$, where $i \in [N] \backslash k$. 
    \item Let $\bar{r}^{C}_{Dec} = [\bar{r}_{1}^C, \ldots, \bar{r}_{c}^C]^\intercal$ be the RV learned through the global objective of DecL framework, then we have $1-\frac{O( d^{-\frac{1}{5}} )}{( 1-\frac{1}{| \bar{A}|} )  d^{\frac{2}{5}} +O( d^{-\frac{1}{5}} )} \leq \bar{r}^C \leq 1$.
    \item Let $\bar{r}^{M}_{Fed} = [\bar{\textbf{r}}_{1}^M, \ldots, \bar{\textbf{r}}_{c}^M]^\intercal$ be the RV learned through the global objective of FL framework, we have $1-\frac{O( d^{-\frac{1}{5}} )}{d^{\frac{2}{5}}-\Theta ( d^{\frac{7}{20}} ) +O( d^{-\frac{1}{5}} )} \leq \bar{r}_{Fed}^C \leq 1$.
\end{enumerate}    
\end{lemma}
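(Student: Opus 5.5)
The plan follows the template of the proof of Theorem \ref{theorem_rep_DecMIM} step by step, replacing the masking-induced matrix with the augmentation-induced one. First, I rewrite the similar-augmentation objective $\mathcal{L}_{CL} = -\mathbb{E}_{x\sim D_i}[(W(x+\xi))^{\intercal}(W(x+\xi'))] + \frac{1}{2}\|W^{\intercal}W\|_F^2$ as $-\mathbb{E}[a^{\intercal}W^{\intercal}Wb] + \frac12\|W^\intercal W\|_F^2$ with $a=x+\xi$ and $b=x+\xi'$. I then reuse the gradient computation of Eqs.(\ref{eq_gradient_deriviation_start})--(\ref{eq_gradient_deriviation_end}) and invoke Assumption \ref{assum_non_degenerate}. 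This gives the stationary condition $W^{\intercal}W = X_i^C := \frac12\mathbb{E}[ab^{\intercal}+ba^{\intercal}]$. Since $\xi,\xi'$ are independent, mean-zero, and independent of $x$, the cross terms vanish and $X_i^C$ reduces to the empirical second-moment matrix $\frac{1}{|D_i|}\sum_j x_{i,j}x_{i,j}^{\intercal}$. Exactly as before, the objective is, up to scaling and constants, $\|X_i^C - W^{\intercal}W\|_F^2$. By Eckart--Young--Mirsky, the row space of $W$ is therefore the top-$c$ eigenspace of $X_i^C$.

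Next, I compute $\mathbb{E}[X_i^C]$ from the generative model with $\tau=d^{1/5}$ and $\mu=d^{-1/5}$. The coordinates $k\neq i$ among the first $N$ carry $\mathbb{E}[q^2]\tau^2=\Theta(d^{2/5})$, normalized as $d^{2/5}$ in the statement. Coordinate $i$ carries $\Theta(1)$ from $\pm e_i$. The remaining coordinates carry only noise and rare-class mass. The key difference from MIM is the size of the error. For CL there is no Hadamard masking, so the off-diagonal entries are cross terms such as $\mathbb{E}[q^{(k)}q^{(l)}]\tau^2$ and $\tau\mu$ terms. Also, the $\pm e_i$ part times the $q$ parts, after averaging over the two frequent classes, leaves residuals of order $\tau\mu=\Theta(1)$ and $\mu^2$-type terms that, after centering, give a perturbation of size $O(d^{-1/5})$ instead of $O(d^{-2/5})$. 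I would bound this perturbation, together with the concentration error $\|X_i^C-\mathbb{E}X_i^C\|_2$, by $O(d^{-1/5})$ with probability $1-e^{-\Omega(d^{1/10})}$. This uses Assumption \ref{assum_local_independence}, $|D_i|=\Theta(d^{\beta})$, and a matrix Bernstein bound as in \citep{vershynin2018high}.

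With this in hand, I apply Weyl's inequality and the bound $e_k^{\intercal}X e_k\le \lambda_1(X)\sum_j(e_k^{\intercal}v_j)^2$ from Eq.(\ref{eq_inequality_x_i^M}). This sandwiches $e_k^{\intercal}X_i^Ce_k$ and $\lambda_1$ within $d^{2/5}\pm O(d^{-1/5})$, which yields the ratio $1-\frac{O(d^{-1/5})}{d^{2/5}+O(d^{-1/5})}$ for item 1. For item 2, I use Assumptions \ref{assum_graph} and \ref{assum_consensus} to replace the DecL objective by $\|\overline{X^C}-W^{\intercal}W\|_F^2$ with $\overline{X^C}=\frac1N\sum_i\frac{1}{|A_i|}\sum_{j\in A_i}X_j^C$. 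Averaging the diagonal as in Eq.(\ref{eq_x_i_M_decl}) then gives leading entries $(1-1/|\bar A|)d^{2/5}+O(d^{-1/5})$, and the same ratio argument finishes it. For item 3, I take $|A_i|=N=\Theta(d^{1/20})$, so $(1-1/N)d^{2/5}=d^{2/5}-\Theta(d^{7/20})$, and repeat.

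The main obstacle is the second step: justifying that the CL error term is exactly $O(d^{-1/5})$, which must be worse than MIM's $O(d^{-2/5})$. The issue is that, without the $2p(1-p)$ masking factor and the diagonalizing effect of $\mathrm{diag}(x)$, off-diagonal $q$-correlations and $\tau\mu$ cross terms survive. I would first try to show that these survive only at order $\mu\cdot$(centered fluctuation) $=O(d^{-1/5})$ after accounting for the symmetric $\pm e_i$ classes and the independence of $\xi$. Failing that, I would absorb the rank-one mean component $\frac{\tau^2}{4}\mathbf{1}\mathbf{1}^{\intercal}$ into the leading eigenspace, which is spanned by $e_1,\dots,e_c$, so it does not alter the projections onto those coordinates.
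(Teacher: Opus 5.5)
Your plan follows the paper's proof step for step: the stationary condition $W^\intercal W=X_i^C$, Eckart--Young, a diagonal $\mathbb{E}[X_i^C]$ with leading entries $d^{2/5}+O(d^{-1/5})$, Weyl plus the ratio $e_k^\intercal X e_k/\lambda_1$, then neighborhood averaging with $N=\Theta(d^{1/20})$. The step you flag as the main obstacle is a genuine gap, and the paper does not close it either; it simply writes $\mathbb{E}[X_i^C]$ as a diagonal matrix.

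Under the stated model, $q\in\{0,1\}$ is uniform and therefore uncentered. For distinct $k,l\in S:=[N]\setminus\{i\}$ this gives $\mathbb{E}[x_kx_l]=\tau^2\,\mathbb{E}[q_kq_l]=\tau^2/4$ for independent draws. That is a deterministic mean term of the same order as the diagonal entries $\tau^2/2+\mu^2$, so no centering or fluctuation argument can make it $O(d^{-1/5})$, and your first remedy cannot work. On $S$ the population matrix is $\frac{\tau^2}{4}(I+\mathbf{1}\mathbf{1}^\intercal)+\mu^2 I$, whose top eigenvalue is about $N\tau^2/4=\Theta(d^{9/20})$. So $\lambda_1$ is not within $d^{2/5}\pm O(d^{-1/5})$, and the ratio bound gives only about $2/N$, which tends to $0$. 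The averaged matrices in items 2 and 3 have the same structure on all of $[N]$, so all three items break here.

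There is a second, independent problem with the inequality you inherit, $e_k^\intercal X e_k\le\lambda_1\sum_j(e_k^\intercal v_j)^2$. If the sum runs over all $d$ eigenvectors, as in the paper, it is identically $1$ and says nothing about $r_{i,k}$. What you actually need is $e_k^\intercal Xe_k\le\lambda_1 r_k+\lambda_{c+1}(1-r_k)$, i.e.\ $r_k\ge(e_k^\intercal Xe_k-\lambda_{c+1})/(\lambda_1-\lambda_{c+1})$, and here that still yields only about $2/N$.

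Your fallback is the right idea, but it has to replace the Weyl/ratio step, not supplement it.
\begin{itemize}
\item In expectation, $\mathrm{span}\{e_k\}_{k\in S}$ is an invariant subspace, because cross terms with $e_i$ cancel between classes $2i-1$ and $2i$. Its eigenvalues are at least $\tau^2/4+\mu^2$, while the rest of the spectrum is at most $1+\mu^2$ up to negligible rare-class mass.
\item Since $c\ge 2N$, the empirical top-$c$ eigenspace contains the empirical top-$(N-1)$ one. Davis--Kahan with eigengap $\Theta(\tau^2)$ then gives $r_{i,k}\ge 1-O\bigl(\|X_i^C-\mathbb{E}X_i^C\|_2^2/\tau^4\bigr)$ for $k\in S$.
\item This implies item 1 as soon as $\|X_i^C-\mathbb{E}X_i^C\|_2=O(d^{1/10})$. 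That is a much weaker concentration requirement than the $O(d^{-1/5})$ you were aiming for.
\item Items 2 and 3 follow the same way from $\mathrm{span}\{e_k\}_{k\in[N]}$. With DecL weights $w_j=\frac1N\sum_{i:\,j\in A_i}\frac{1}{|A_i|}\le\frac12$, the averaged block has diagonal $(1-w_k)\tau^2/2+w_k+\mu^2$ and off-diagonal $(1-w_k-w_l)\tau^2/4$. Its smallest eigenvalue is still roughly at least $\tau^2/8$, so the eigengap remains $\Theta(\tau^2)$.
\end{itemize}

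Two side remarks. First, you do not need the CL error to be genuinely worse than MIM's, since the lemma only asserts an $O(d^{-1/5})$ upper bound. Second, the leading DecL coefficient is $1-w_k$, not $1-1/|\bar A|$; for a regular graph $w_k=1/N$ and DecL coincides with FL. This is harmless for item 2 only because $1-1/|\bar A|\in[1/2,1)$ is absorbed into the $O(\cdot)$.
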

\begin{proof}
Following the proof in \ref{appendix_proof_for_MIM}, we first discuss local representability learned by distributed contrastive learning and then derive the global representation based on these local features. Since federated learning differs from decentralized learning in terms of updates, we construct separate global representations for each distributed framework.

$\textbf{For local feature space.}$ Let $a = x+\xi$ and $b = x+\xi'$. Based on the loss function of contrastive learning (CL) shown in Section \ref{sec_noniid_setup}, we obtain
\begin{equation}
\begin{aligned}
\mathcal{L}_{CL} &= -\mathbb{E}_{x \sim D_i} [(W(x+\xi))^{\intercal}(W(x+\xi'))] + \frac{1}{2}||W^{\intercal}W||^2_F \\
&= -\mathbb{E}_{x \sim D_i} [a^{\intercal}W^{\intercal}Wb] + \frac{1}{2}||W^{\intercal}W||^2_F.
\end{aligned}
\end{equation}
By the same derivation between Eq.(\ref{eq_gradient_deriviation_start})-Eq.(\ref{eq_gradient_deriviation_end}), the gradient of the above function is
\begin{equation}
\label{eq_gradient_CL_leftab}
    \frac{\partial\mathcal{L} _ {\mathrm{CL}}}{\partial W}=-W\mathbb{E}\bigl[(x+\xi')(x+\xi)^{\intercal}+(x+\xi)(x+\xi')^{\intercal}\bigr]+2WW^{\intercal}W.
\end{equation}

To find the minimizer of $\mathcal{L}_{CL}$, we solve for
\begin{equation}
-W\mathbb{E}\bigl[(x+\xi')(x+\xi)^{\intercal}+(x+\xi)(x+\xi')^{\intercal}\bigr]+2WW^{\intercal}W = 0.    
\end{equation}
Under Assumption \ref{assum_non_degenerate}, it leads to
\begin{equation}
\frac{1}{2}\mathbb{E}\bigl[2xx^{\intercal}+x\xi^{\intercal}+\xi'x^{\intercal} +x(\xi')^{\intercal} + \xi x^{\intercal} + \xi'\xi^{\intercal} + \xi(\xi')^{\intercal}  \bigr] =  W^{\intercal}W.
\end{equation}
 Similarly, let $X_{i}^{C}$ represent the left-hand side of this equation. We can then establish
\begin{equation}
\begin{aligned}
X_{i}^{C}=\frac{1}{2\left| D_i \right|}\sum_{j=1}^{\left| D_i \right|}\left( 2x_{i,j}x_{i,j}^{\intercal}+x_{i,j}\xi_{i,j}^{\intercal}+\xi'_{i,j}x_{i,j}^{\intercal} +x_{i,j}(\xi')^{\intercal}_{i,j} + \xi_{i,j} x^{\intercal}_{i,j} + \xi'_{i,j}\xi^{\intercal}_{i,j} + \xi_{i,j}(\xi')^{\intercal}_{i,j} \right),
\end{aligned}
\end{equation}
where $X_{i}^{C}$ represents the empirical covariance matrix for the local feature learned by CL on client $i$. Considering that $\xi, \xi' \sim \mathcal{N}(0,I)$, we also derive the following expectation of $X_{i}^{C}$:
\begin{equation}
\begin{aligned}
&\mathbb{E}\left[ X_{i}^{C} \right] = \text{diag}\\
& \left(  \underset{N\ \text{terms}}{\underbrace{\tau^2 + O\left( d^{-\frac{2}{5}} \right) + 2O\left( d^{-\frac{1}{5}} \right), ..., \underset{i^{\text{th}}\ \text{term}}{\underbrace{1 + O\left( d^{-\frac{2}{5}} \right) + 2O\left( d^{-\frac{1}{5}} \right)}}, ..., \tau^2 + O\left( d^{-\frac{2}{5}} \right) + 2O\left( d^{-\frac{1}{5}} \right)}}, \right.\\
&\left.\quad \dots \underset{d-N\ \text{terms}}{\underbrace{2O\left( d^{-\frac{1}{5}} \right) + O\left( d^{-\frac{2}{5}} \right), ..., 2O\left( d^{-\frac{1}{5}} \right) + O\left( d^{-\frac{2}{5}} \right)}} \right)\\
&= \text{diag}\left( d^{\frac{2}{5}} + O\left( d^{-\frac{1}{5}} \right), ..., 1 + O\left( d^{-\frac{1}{5}} \right), ..., d^{\frac{2}{5}} + O\left( d^{-\frac{1}{5}} \right), ..., O\left( d^{-\frac{1}{5}} \right) \right)
\end{aligned}
\end{equation}

Next, using similar arguments from Eqs. (\ref{eq_inequality_x_i^M}) to (\ref{eq_lambda_M}), we arrive at the below results:
\begin{equation}
\begin{aligned}
& d^{\frac{2}{5}}-O\left( d^{-\frac{1}{5}} \right) \le e_{k}^{\intercal}X_{i}^{C}e_k \le d^{\frac{2}{5}}+O\left( d^{-\frac{1}{5}} \right) \\
& d^{\frac{2}{5}}-O\left( d^{-\frac{1}{5}} \right)  \le \lambda _{i,1}^{C}
\le d^{\frac{2}{5}}+O\left( d^{-\frac{1}{5}} \right).
\end{aligned}
\end{equation}

With these inequalities, we derive
\begin{equation}
\begin{aligned}
\sum_{j=1}^d{({e_{k}^{\intercal} v_{j}^{C}})^2} &\ge \frac{d^{\frac{2}{5}}-O\left( d^{-\frac{1}{5}} \right)}{d^{\frac{2}{5}}+O\left( d^{-\frac{1}{5}} \right)}
\\
&= 1-\frac{O\left( d^{-\frac{1}{5}} \right)}{d^{\frac{2}{5}}+O\left( d^{-\frac{1}{5}} \right)},
\end{aligned}
\end{equation}

which completes the proof of the local part.

\textbf{For global feature space.} 
Since the local goal can be equivalently reformulated as $\lVert X_{i}^{C}-W^{\intercal}W \rVert _{F}^{2}$, by Assumptions \ref{assum_graph} and \ref{assum_consensus}, the global goal of distributed contrastive learning in the decentralized learning (DecL) framework is given by
\begin{equation}
\begin{aligned}
\underset{W}{\min}\sum_{i\in \left[ N \right]}{\frac{1}{N}\sum_{j\in  A_i}{\frac{1}{\left| A_i \right|}\lVert X_{j}^{C}-W^{\intercal}W \rVert _{F}^{2}}}.
\end{aligned}
\end{equation}

Furthermore, we find this is equivalent to solving
\begin{equation}
\begin{aligned}
&\quad \underset{W}{\min} \lVert \frac{1}{N}\sum_{i\in \left[ N \right]}\frac{1}{\left| A_i \right|}{\sum_{j\in A_i}{  X_{j}^{C}-W^{\intercal}W}} \rVert _{F}^{2}
\\
&=\underset{W}{\min} \lVert \frac{1}{N} \sum_{i\in \left[ N \right]}{\overline{X_{i}^{C}}}-W^{\intercal}W \rVert _{F}^{2}
\\
&=\underset{W}{\min} \lVert \overline{X^C}-W^{\intercal}W \rVert _{F}^{2}.
\end{aligned}
\end{equation}

Again, using similar arguments from Eq. (\ref{eq_x_i_M_decl}) to Eq. (\ref{M,1}), we further establish
\begin{equation}\label{C,1}
\begin{aligned}
&\sum_{k=1}^d{({e_{k}^{\intercal} \bar{v}_{j}^{C}})^2 }\ge \frac{\left( 1-\frac{1}{\left| \bar{A} \right|} \right)  d^{\frac{2}{5}} -O\left( d^{-\frac{1}{5}} \right)}{\left( 1-\frac{1}{\left| \bar{A} \right|} \right)  d^{\frac{2}{5}} +O\left( d^{-\frac{1}{5}} \right)}
\\
&=1-\frac{O\left( d^{-\frac{1}{5}} \right)}{\left( 1-\frac{1}{\left| \bar{A} \right|} \right)  d^{\frac{2}{5}} +O\left( d^{-\frac{1}{5}} \right)}.
\end{aligned}
\end{equation}

The proof for the global feature space learned in the DecL framework has been completed. Next, denote federated learning (FL) as a special case of decentralized learning with $\forall i, \left| A_i \right|=N$. The global objective of FL is expressed as
\begin{equation}
\begin{aligned}
\underset{W}{\min}{{\lVert \overline{X^{C}}-W^{\intercal}W \rVert _{F}^{2}}}.
\end{aligned}
\end{equation}

where we denote $\overline{X^{C}} := \frac{1}{N}\sum_{i\in \left[ N \right]}X_i^C$. By similar arguments from Eq. (\ref{FY-fir}) to Eq. (\ref{FY-end}), we have
\begin{equation}\label{C,2}
\begin{aligned}
&\sum_{j=1}^d{({e_{k}^{\intercal} \bar{v}_{j}^{C}})^2}\ge \frac{d^{\frac{2}{5}}-\varTheta \left( d^{\frac{7}{20}} \right) -O\left( d^{-\frac{1}{5}} \right)}{d^{\frac{2}{5}}-\varTheta \left( d^{\frac{7}{20}} \right) +O\left( d^{-\frac{1}{5}} \right)}
\\
&=\frac{d^{\frac{2}{5}}-\varTheta \left( d^{\frac{7}{20}} \right) +O\left( d^{-\frac{1}{5}} \right)}{d^{\frac{2}{5}}-\varTheta \left( d^{\frac{7}{20}} \right) +O\left( d^{-\frac{1}{5}} \right)}-\frac{2O\left( d^{-\frac{1}{5}} \right)}{d^{\frac{2}{5}}-\varTheta \left( d^{\frac{7}{20}} \right) +O\left( d^{-\frac{1}{5}} \right)}
\\
&=1-\frac{O\left( d^{-\frac{1}{5}} \right)}{d^{\frac{2}{5}}-\varTheta \left( d^{\frac{7}{20}} \right) +O\left( d^{-\frac{1}{5}} \right)},
\end{aligned}
\end{equation}

which completes the proof of this lemma.
    
\end{proof}

Then, we start to prove Theorem \ref{theorem_rep_DecCL} as follows.
\begin{proof}
Lemma \ref{lemma_rep_DecCL_similar} demonstrates the learned local and global representations of distributed CL when positive pairs are generated by similar augmentations. For the other case using dissimilar augmentations, we adopt a similar process to derive the local and global representations.

$\textbf{For local feature space.}$ Define $a = x+\xi$ and $b = Hx$. According to the loss function of contrastive learning (CL) with dissimilar augmentations in Section \ref{sec_noniid_setup}, we have
\begin{equation}
\begin{aligned}
    \mathcal{L}_{CL}' &= -\mathbb{E}_{x \sim D} \left[ \left( W(x + \xi) \right)^{\intercal} (WHx) \right] + \frac{1}{2} \left\| W^{\intercal} W \right\|_F^2 \\
&= -\mathbb{E} \left[ a^\intercal W^\intercal W b \right] + \frac{1}{2} \left\| W^\intercal W \right\|_F^2.
\end{aligned}
\end{equation}

The minimizer of this loss function is 
\begin{equation}
\begin{aligned}
  \frac{\partial \mathcal{L}_{CL}'}{\partial W}=-W\mathbb{E}\left[  Hx(x+\xi)^{\intercal}+(x+\xi)x^{\intercal}H^{\intercal} \right] +2WW^{\intercal}W=0.
\end{aligned}
\end{equation}

Rearranging it under Assumption \ref{assum_non_degenerate} derives
\begin{equation}
    \frac{1}{2}\mathbb{E}\left[  Hx(x+\xi)^{\intercal}+(x+\xi)x^{\intercal}H^{\intercal} \right] = W^{\intercal}W.
\end{equation}

Let $X_i^{C'}$ denote the left-hand side of the above equation. Hence,
\begin{equation}
\begin{aligned}
X_{i}^{C'} &=\frac{1}{2}\mathbb{E}\left[   Hx(x+\xi)^{\intercal}+(x+\xi)x^{\intercal}H^{\intercal} \right] \\
&=\frac{1}{2\left| D_i \right|}\sum_{j=1}^{\left| D_i \right|}\left( Hx_{i,j}x_{i,j}^{\intercal} +Hx_{i,j}\xi_{i,j}^{\intercal} + x_{i,j}x^{\intercal}_{i,j}H^{\intercal}+ \xi_{i,j}x^{\intercal}_{i,j}H^{\intercal}\right).
\end{aligned}
\end{equation}

Similarly, based on the formulation that $\xi \sim \mathcal{N}(0, I)$, $\tau = d^{\frac{1}{5}}$ and $\mu = d^{-\frac{1}{5}}$, the expectation of $X_{i}^{C'}$ can be written as
\begin{equation}
\begin{aligned}
& \mathbb{E}(X^{C'}_i) = \\
&\text{diag}\bigg( 
\underset{N\ \text{terms}}{\underbrace{\text{tr}(H)\tau^2 + O\left(d^{-\frac{2}{5}}\right),\,
\ldots,\,
\underset{i^{\text{th}}\ \text{term}}{\underbrace{\text{tr}(H) + O\left(d^{-\frac{2}{5}}\right),}}\,
\ldots,\,
\text{tr}(H)\tau^2 + O\left(d^{-\frac{2}{5}}\right),}}\,
\ldots,\,
O\left(d^{-\frac{2}{5}}\right)
\bigg) \\
&+ \text{diag}\bigg( 
\underset{N\ \text{terms}}{\underbrace{O\left(d^{-\frac{1}{5}}\right),\,
\ldots,\,
O\left(d^{-\frac{1}{5}}\right),}}\,
\ldots,\,
O\left(d^{-\frac{1}{5}}\right)
\bigg) \\
&= \text{diag}\left( 
\text{tr}(H)d^{\frac{2}{5}} + O\left(d^{-\frac{1}{5}}\right),\,
\ldots,\,
\text{tr}(H) + O\left(d^{-\frac{1}{5}}\right),\,
\ldots,\,
\text{tr}(H)d^{\frac{2}{5}} + O\left(d^{-\frac{1}{5}}\right),\,
\ldots,\,
O\left(d^{-\frac{1}{5}}\right)
\right).
\end{aligned}
\end{equation}

Following the proof process from Eqs. (\ref{M:3}) to (\ref{eq_lambda_M}), the following inequalities can be found
\begin{equation}
\begin{aligned}
& \left| \lambda _{i,k}^{C'}-\lambda _k\mathbb{E}\left[ X_{i}^{C'} \right] \right|\le \lVert X_{i}^{C'}-\mathbb{E}\left[ X_{i}^{C'} \right] \rVert _2\le O\left( d^{-\frac{1}{5}} \right) \\
& \text{tr}(H)d^{\frac{2}{5}}-O\left( d^{-\frac{1}{5}} \right) \le e_{k}^{\intercal}X_{i}^{C'}e_k \le \text{tr}(H)d^{\frac{2}{5}}+O\left( d^{-\frac{1}{5}} \right) \\
& \text{tr}(H)d^{\frac{2}{5}}-O\left( d^{-\frac{1}{5}} \right)  \le \lambda _{i,1}^{C'}
\le \text{tr}(H)d^{\frac{2}{5}}+O\left( d^{-\frac{1}{5}} \right).
\end{aligned}
\end{equation}
However, unlike the previous proof, there exists a potential issue that the image transformation matrix $H$ may lead to the case that $ X_{i}^{C'}$ is not a square matrix. Then we denote  $X_i^{C'} =\sum_{j=1}^d{\lambda_{i,j}u_{i,j}^{C'}v_{i,j}^{C'}}$, where $u_{i,j}^{C'}$ and $v_{i,j}^{C'}$ are left and right singular vectors produced by SVD decomposition. So, we have
\begin{equation}
\begin{aligned}
    e_{k}^{\intercal}X_{i}^{C'}e_k 
    &= \sum_{j=1}^d \lambda _{i,j} ({e_{k}^{\intercal} u_{i,j}^{C'} v_{i,j}^{C'}e_{k}}) \\ 
    &\leq \lambda _{i,1}^{C'}\sum_{j=1}^d|e_{k}^{\intercal} u_{i,j}^{C'} v_{i,j}^{C'}e_{k}|,
\end{aligned}
\end{equation}
which further leads to
\begin{equation}
\label{eq_dCL_dissismilar_local_lb}
\begin{aligned}
\sum_{j=1}^d{| {e_{k}^{\intercal} u_{i,j}^{C'} v_{i,j}^{C'}}e_{k} |} &\ge \frac{\text{tr}(H)d^{\frac{2}{5}}-O\left( d^{-\frac{1}{5}} \right)}{\text{tr}(H)d^{\frac{2}{5}}+O\left( d^{-\frac{1}{5}} \right)}
\\
&= 1-\frac{O\left( d^{-\frac{1}{5}} \right)}{\text{tr}(H)d^{\frac{2}{5}}+O\left( d^{-\frac{1}{5}} \right)}.
\end{aligned}
\end{equation}

$\textbf{For global feature space.}$ By similar augments from Eq. (\ref{eq_x_i_M_decl}) to Eq. (\ref{M,1}) and based on Eq.(\ref{eq_dCL_dissismilar_local_lb}), for the global representation learned through the decentralized learning framework, we establish
\begin{equation}
\label{eq_dCL_dissismilar_dec_lb}
\begin{aligned}
&\sum_{k=1}^d{| e_{k}^{\intercal}\bar{u}_{j}^{C'}\bar{v}_{j}^{C'}e_{k} |}\ge \frac{\text{tr}(H)\left( 1-\frac{1}{\left| \bar{A} \right|} \right)  d^{\frac{2}{5}} -O\left( d^{-\frac{1}{5}} \right)}{\text{tr}(H)\left( 1-\frac{1}{\left| \bar{A} \right|} \right)  d^{\frac{2}{5}} +O\left( d^{-\frac{1}{5}} \right)}
\\
&=1-\frac{O\left( d^{-\frac{1}{5}} \right)}{\text{tr}(H)\left( 1-\frac{1}{\left| \bar{A} \right|} \right)  d^{\frac{2}{5}} +O\left( d^{-\frac{1}{5}} \right)}.
\end{aligned}
\end{equation}

On the other hand, for the global objective of the federated learning framework, we follow the arguments from Eq. (\ref{FY-fir}) to Eq. (\ref{FY-end}) to derive
\begin{equation}
\label{eq_dCL_dissismilar_fed_lb}
\begin{aligned}
&\sum_{j=1}^d{| e_{k}^{\intercal}\bar{u}_{j}^{C'}\bar{v}_{j}^{C'}e_{k} |}\ge \frac{\text{tr}(H)d^{\frac{2}{5}}-\varTheta \left( d^{\frac{7}{20}} \right) -O\left( d^{-\frac{1}{5}} \right)}{\text{tr}(H)d^{\frac{2}{5}}-\varTheta \left( d^{\frac{7}{20}} \right) +O\left( d^{-\frac{1}{5}} \right)}
\\
&=1-\frac{O\left( d^{-\frac{1}{5}} \right)}{\text{tr}(H)d^{\frac{2}{5}}-\varTheta \left( d^{\frac{7}{20}} \right) +O\left( d^{-\frac{1}{5}} \right)}.
\end{aligned}
\end{equation}

Combining Lemma \ref{lemma_rep_DecCL_similar}, Eq.(\ref{eq_dCL_dissismilar_local_lb}), Eq.(\ref{eq_dCL_dissismilar_dec_lb}) and Eq.(\ref{eq_dCL_dissismilar_fed_lb}) completes the proof.

\end{proof}

\subsubsection{Proof of First Theoretical Insight}
This section provides the full proof of Theorem \ref{theorem_robust_compare_1}.

\begin{proof}

According to Theorem \ref{theorem_rep_DecMIM} and Theorem \ref{theorem_rep_DecCL}, we can find the main difference between the global representations lies in the lower bound. For the global feature learned in the decentralized learning (DecL) framework, we denote the sensitivity of D-SSL as below:

\begin{equation}
\label{eq_s_dec_M}
s_{Dec}^M = \frac{O\left( d^{-\frac{2}{5}} \right)}{2p\left( 1-p \right) \left( 1-\frac{1}{\left| \bar{A} \right|} \right)  d^{\frac{2}{5}} +O\left( d^{-\frac{2}{5}} \right)},
\end{equation}
\begin{equation}
\label{eq_s_dec_C_similar}
s_{Dec}^{C_1}=\frac{O\left( d^{-\frac{1}{5}} \right)}{\left( 1-\frac{1}{\left| \bar{A} \right|} \right)  d^{\frac{2}{5}} -O\left( d^{-\frac{1}{5}} \right)},
\end{equation}
\begin{equation}
\label{eq_s_dec_C_dissimilar}
s_{Dec}^{C_2}=\frac{O\left( d^{-\frac{1}{5}} \right)}{\text{tr}(H)\left( 1-\frac{1}{\left| \bar{A} \right|} \right)  d^{\frac{2}{5}} -O\left( d^{-\frac{1}{5}} \right)},
\end{equation}

where $s_{Dec}^M$ represents the sensitivity of MIM-based D-SSL to heterogeneous data, $s_{Dec}^{C_1}$ represents the sensitivity of CL-based SSL with similar augmentations, and $s_{Dec}^{C_2}$ represents the sensitivity of CL-based SSL with dissimilar augmentations. Then, we compare the magnitude of $s_{Dec}^M$ and $s_{Dec}^{C_1}$ by solving the following equation:

\begin{equation}
\begin{aligned}
 &s_{Dec}^M - s_{Dec}^{C_1} = \frac{O\left( d^{-\frac{2}{5}} \right)}{\left( 1-\frac{1}{\left| \bar{A} \right|} \right)  d^{\frac{2}{5}} +O\left( d^{-\frac{2}{5}} \right)} 
- \frac{O\left( d^{-\frac{1}{5}} \right)}{\left( 1-\frac{1}{\left| \bar{A} \right|} \right)  d^{\frac{2}{5}} -O\left( d^{-\frac{1}{5}} \right)} \\
&= \frac{O\left( d^{-\frac{2}{5}} \right) \left( \left( 1-\frac{1}{\left| \bar{A} \right|} \right)  d^{\frac{2}{5}} -O\left( d^{-\frac{1}{5}} \right) \right) 
- O\left( d^{-\frac{1}{5}} \right) \left( \left( 1-\frac{1}{\left| \bar{A} \right|} \right)  d^{\frac{2}{5}} +O\left( d^{-\frac{2}{5}} \right) \right)}{\left( \left( 1-\frac{1}{\left| \bar{A} \right|} \right)  d^{\frac{2}{5}} -O\left( d^{-\frac{1}{5}} \right) \right) 
\left( \left( 1-\frac{1}{\left| \bar{A} \right|} \right)  d^{\frac{2}{5}} +O\left( d^{-\frac{2}{5}} \right) \right)}.
\end{aligned}
\end{equation}

Consider the dimension $d$ of the Euclidean space is very large so that $d \to \infty$. Then, we have

\begin{equation}
\begin{aligned}
&\lim_{d \to \infty} [ s_{Dec}^M - s_{Dec}^{C_1}] = \\
&\lim_{d \to \infty}\frac{O\left( d^{-\frac{2}{5}} \right) \left( \left( 1-\frac{1}{\left| \bar{A} \right|} \right)  d^{\frac{2}{5}} -O\left( d^{-\frac{1}{5}} \right) \right) 
- O\left( d^{-\frac{1}{5}} \right) \left( \left( 1-\frac{1}{\left| \bar{A} \right|} \right)  d^{\frac{2}{5}} +O\left( d^{-\frac{2}{5}} \right) \right)}{\left( \left( 1-\frac{1}{\left| \bar{A} \right|} \right)  d^{\frac{2}{5}} -O\left( d^{-\frac{1}{5}} \right) \right) 
\left( \left( 1-\frac{1}{\left| \bar{A} \right|} \right)  d^{\frac{2}{5}} +O\left( d^{-\frac{2}{5}} \right) \right)} \\
&= \lim_{d \to \infty} \frac{-\left( 1-\frac{1}{\left| \bar{A} \right|} \right)  O\left( d^{\frac{1}{5}} \right)}{\left( 1-\frac{1}{\left| \bar{A} \right|} \right) ^2 \varTheta \left( d^{\frac{4}{5}} \right)}.
\end{aligned}
\end{equation}

Due to the fact that $2 \leq \left| \bar{A} \right| \leq N$, we prove

\begin{equation}
\label{eq_first_insight_r1_similar}
    \quad \lim_{d \to \infty} [ s_{Dec}^M - s_{Dec}^{C_1}] < 0.
\end{equation}

Similarly, under Assumption \ref{assum_H_CL}, we determine if $s_{Dec}^M$ is less than $s_{Dec}^{C_2}$ as follows
\begin{equation}
\lim_{d \to \infty} [ \frac{s_{Dec}^{C_2}}{s_{Dec}^M}] =
\lim_{d \to \infty}
\frac{
\displaystyle
\frac{O\left(d^{-\frac{1}{5}}\right)}
{\operatorname{tr}(H) \left( 1-\frac{1}{\left| \bar{A} \right|} \right)  d^{\frac{2}{5}} + O\left(d^{-\frac{1}{5}}\right)}
}
{
\displaystyle
\frac{O\left(d^{-\frac{2}{5}}\right)}
{2p(1 - p) \left( 1-\frac{1}{\left| \bar{A} \right|} \right)  d^{\frac{2}{5}} + O\left(d^{-\frac{2}{5}}\right)}
}
=
\frac{d^{-\frac{3}{5}}}{d^{-\frac{4}{5}}}
= \infty,
\end{equation}

which implies
\begin{equation}
\label{eq_first_insight_r1_dissimilar}
    \quad \lim_{d \to \infty} [ s_{Dec}^M - s_{Dec}^{C_2}] < 0.
\end{equation}

Combining Eqs.(\ref{eq_first_insight_r1_similar}) and (\ref{eq_first_insight_r1_dissimilar}) arrives
\begin{equation}
\label{eq_first_insight_r1}
    \quad \lim_{d \to \infty} [ s_{Dec}^M - s_{Dec}^{C}] < 0,
\end{equation}
where $s_{Dec}^{C}$ denotes the sensitivity of CL-based SSL to heterogeneous data. On the other hand, for the federated learning (FL) framework, we denote the following sensitivity of D-SSL:

\begin{equation}
\label{eq_s_fed_M}
s_{Fed}^M=\frac{O\left( d^{-\frac{2}{5}} \right)}{2p\left( 1-p \right) d^{\frac{2}{5}} -\varTheta \left( d^{\frac{7}{20}} \right) +O\left( d^{-\frac{2}{5}} \right)},
\end{equation}

\begin{equation}
\label{eq_s_fed_C_similar}
s_{Fed}^{C_1}=\frac{O\left( d^{-\frac{1}{5}} \right)}{d^{\frac{2}{5}}-\varTheta \left( d^{\frac{7}{20}} \right) +O\left( d^{-\frac{1}{5}} \right)},
\end{equation}

\begin{equation}
\label{eq_s_fed_C_dissimilar}
s_{Fed}^{C_2}=\frac{O\left( d^{-\frac{1}{5}} \right)}{\text{tr}(H)d^{\frac{2}{5}}-\varTheta \left( d^{\frac{7}{20}} \right) +O\left( d^{-\frac{1}{5}} \right)}.
\end{equation}

The difference between $s_{Fed}^M$ and $s_{Fed}^{C_1}$ is given by

\begin{equation}\label{con1:e1}
\begin{aligned}
s_{Fed}^M-s_{Fed}^{C_1} &= \frac{O\left( d^{-\frac{4}{5}} \right)}{2p\left( 1-p \right) -\varTheta \left( d^{-\frac{1}{20}} \right) +O\left( d^{-\frac{4}{5}} \right)} - \frac{O\left( d^{-\frac{3}{5}} \right)}{1-\varTheta \left( d^{-\frac{1}{20}} \right) +O\left( d^{-\frac{3}{5}} \right)} \\
&= \frac{O\left( d^{-\frac{4}{5}} \right) \left( 1-\varTheta \left( d^{-\frac{1}{20}} \right) +O\left( d^{-\frac{3}{5}} \right) \right) 
- O\left( d^{-\frac{3}{5}} \right) \left( 1-\varTheta \left( d^{-\frac{1}{20}} \right) +O\left( d^{-\frac{4}{5}} \right) \right)}{\left( 1-\varTheta \left( d^{-\frac{1}{20}} \right) +O\left( d^{-\frac{4}{5}} \right) \right) 
\left( 1-\varTheta \left( d^{-\frac{1}{20}} \right) +O\left( d^{-\frac{3}{5}} \right) \right)} \\
&= \frac{-O\left( d^{-\frac{3}{5}} \right) +\varTheta \left( d^{-\frac{13}{20}} \right)}{d^{\frac{1}{5}}-\varTheta \left( d^{\frac{3}{20}} \right) +O\left( d^{-\frac{3}{5}} \right)}.
\end{aligned}
\end{equation}

For the above result, let $d\rightarrow \infty$, we can establish
\begin{equation}
\label{eq_first_insight_r2_similar}
\begin{aligned}
\lim_{d\rightarrow \infty} [s_{Fed}^M-s_{Fed}^{C_1}] = \lim_{d\rightarrow \infty} \frac{-O\left( d^{-\frac{3}{5}} \right) +\varTheta \left( d^{-\frac{13}{20}} \right)}{d^{\frac{1}{5}}-\varTheta \left( d^{\frac{3}{20}} \right) +O\left( d^{-\frac{3}{5}} \right)} 
&= \lim_{d\rightarrow \infty} \frac{-O\left( d^{-\frac{3}{5}} \right)}{d^{\frac{1}{5}}} < 0
\end{aligned}
\end{equation}

Then for the comparison between $s_{Fed}^M$ and $s_{Fed}^{C_2}$, under Assumption \ref{assum_H_CL}, we have
\begin{equation}
\label{eq_first_insight_r2_dissimilar}
\lim_{d \to \infty} [ \frac{s_{Fed}^{C_2}}{s_{Fed}^M}] =
\lim_{d \to \infty}
\frac{
\displaystyle
\frac{O\left(d^{-\frac{1}{5}}\right)}
{\operatorname{tr}(H)\, d^{\frac{2}{5}} - \Theta\left(d^{\frac{7}{20}}\right) + O\left(d^{-\frac{1}{5}}\right)}
}
{
\displaystyle
\frac{O\left( d^{-\frac{1}{5}} \right)}{d^{\frac{2}{5}}-\varTheta \left( d^{\frac{7}{20}} \right) +O\left( d^{-\frac{1}{5}} \right)}
}
=
\frac{d^{-\frac{3}{5}}}{d^{-\frac{4}{5}}}
= \infty.
\end{equation}

With Eqs.(\ref{eq_first_insight_r2_similar}) and (\ref{eq_first_insight_r2_dissimilar}), we find
\begin{equation}
\label{eq_first_insight_r2}
    \quad \lim_{d \to \infty} [ s_{Fed}^M - s_{Fed}^{C}] < 0.
\end{equation}

Combining Eq.(\ref{eq_first_insight_r1}) and Eq.(\ref{eq_first_insight_r2}) completes the proof.

\end{proof}

\newpage

\subsubsection{Proof of Second Theoretical Insight}

This section provides the full proof of Corollary \ref{corollary_avg_connectivity} and Theorem \ref{theorem_robust_compare_2}.

\begin{proof} 

For the decentralized learning (DecL) framework, we notice from Eqs.(\ref{eq_s_dec_M}), (\ref{eq_s_dec_C_similar}) and (\ref{eq_s_dec_C_dissimilar}) that their denominators both include the term $1-\frac{1}{\left| \bar{A} \right|}$. Since $\left| \bar{A} \right|$ is proportional to $1-\frac{1}{\left| \bar{A} \right|}$, we derive that $\left| \bar{A} \right|$ is inversely proportional to $s_{Dec}^M$, $s_{Dec}^{C_1}$ and $s_{Dec}^{C_2}$, which completes the proof of Corollary \ref{corollary_avg_connectivity}. Next, by a similar proof from Eq.(\ref{eq_s_dec_M}) to Eq.(\ref{eq_first_insight_r2}), we compare the robustness of distributed MIM between DecL and FL framework by solving

\begin{equation}
 s_{Dec}^M - s_{Fed}^M = \frac{O\left( d^{-\frac{2}{5}} \right)}{2p\left( 1-p \right)\left( 1-\frac{1}{\left| \bar{A} \right|} \right)  d^{\frac{2}{5}} +O\left( d^{-\frac{2}{5}} \right)} 
- \frac{O\left( d^{-\frac{2}{5}} \right)}{2p\left( 1-p \right)d^{\frac{2}{5}} -\varTheta \left( d^{\frac{7}{20}} \right) +O\left( d^{-\frac{2}{5}} \right)}.
\end{equation}

This is equivalent to solving

\begin{equation}
\begin{aligned}
&2p\left( 1-p \right)\left( 1-\frac{1}{\left| \bar{A} \right|} \right)  d^{\frac{2}{5}} - \left( 2p\left( 1-p \right) d^{\frac{2}{5}}-\varTheta \left( d^{\frac{7}{20}} \right) \right)
\\
&=2p\left( 1-p \right) d^{\frac{2}{5}} -\frac{2p\left( 1-p \right)}{\left| \bar{A} \right|} d^{\frac{2}{5}} -2p\left( 1-p \right) d^{\frac{2}{5}}+\varTheta \left( d^{\frac{7}{20}} \right).
\end{aligned}
\end{equation}

Due to the fact that

\begin{equation}
      \lim_{d \to \infty}\left[ 2p\left( 1-p \right) d^{\frac{2}{5}} -\frac{2p\left( 1-p \right)}{\left| \bar{A} \right|} d^{\frac{2}{5}} -2p\left( 1-p \right) d^{\frac{2}{5}}+\varTheta \left( d^{\frac{7}{20}} \right) \right] < 0,
\end{equation}

we have

\begin{equation}
\label{eq_second_insight_r1}
    \lim_{d \to \infty} [s_{Dec}^M - s_{Fed}^M] > 0.
\end{equation}

Similarly, for CL-based SSL, we have
\begin{equation}
 s_{Dec}^{C_1} - s_{Fed}^{C_1} = \frac{O\left( d^{-\frac{1}{5}} \right)}{\left( 1-\frac{1}{\left| \bar{A} \right|} \right)  d^{\frac{2}{5}} -O\left( d^{-\frac{1}{5}} \right)} 
- \frac{O\left( d^{-\frac{1}{5}} \right)}{d^{\frac{2}{5}}-\varTheta \left( d^{\frac{7}{20}} \right) +O\left( d^{-\frac{1}{5}} \right)},
\end{equation}

\begin{equation}
 s_{Dec}^{C_2} - s_{Fed}^{C_2} = \frac{O\left( d^{-\frac{1}{5}} \right)}{\text{tr}(H)\left( 1-\frac{1}{\left| \bar{A} \right|} \right)  d^{\frac{2}{5}} -O\left( d^{-\frac{1}{5}} \right)} 
- \frac{O\left( d^{-\frac{1}{5}} \right)}{\text{tr}(H)d^{\frac{2}{5}}-\varTheta \left( d^{\frac{7}{20}} \right) +O\left( d^{-\frac{1}{5}} \right)}.
\end{equation}

Under Assumption \ref{assum_H_CL}, the above results imply that 
\begin{equation}
\label{eq_second_insight_r2_similar}
     \lim_{d \to \infty} [s_{Dec}^{C_1} - s_{Fed}^{C_1}]  > 0,
\end{equation}
\begin{equation}
\label{eq_second_insight_r2_dissimilar}
     \lim_{d \to \infty} [s_{Dec}^{C_2} - s_{Fed}^{C_2}]  > 0.
\end{equation}

With Eqs.(\ref{eq_second_insight_r2_similar}) and (\ref{eq_second_insight_r2_dissimilar}), we find 
\begin{equation}
\label{eq_second_insight_r2}
     \lim_{d \to \infty} [s_{Dec}^{C} - s_{Fed}^{C}]  > 0.
\end{equation}

Combining Eq.(\ref{eq_second_insight_r1}) with Eq.(\ref{eq_second_insight_r2}) derives 
\begin{equation}
\label{eq_second_insight_r3}
     \lim_{d \to \infty} [s_{Dec} > s_{Fed}].
\end{equation}

Note that Eq.(\ref{eq_second_insight_r3}) holds for decentralized learning setups in which each client has an inconsistent number of neighbors. However, there exists an optimal case for decentralized learning, denoted by $ \forall i,\left| A_i \right|=N$. In this case, the global objective of decentralized learning can be re-formulated as follows:

\begin{equation}
    \sum_{i\in \left[ N \right]}{\frac{1}{N}\sum_{j\in \left[ N \right]}{\frac{1}{N}\mathcal{L}}}=\sum_{i\in \left[ N \right]}{\frac{1}{N}\mathcal{L}}.
\end{equation}

This equation is exactly the same as the global objective of federated learning shown in Eq.(\ref{eq_distributed_global}). Therefore, we know the below statement holds:
\begin{equation}
\label{eq_second_insight_r4}
     \lim_{d \to \infty} [s_{Dec} = s_{Fed}],
\end{equation}
when $ \forall i \in [N],\left| A_i \right|=N$. Combining Eq.(\ref{eq_second_insight_r3}) and Eq.(\ref{eq_second_insight_r4}) completes the proof.

\end{proof}

\end{document}